\documentclass{article} % For LaTeX2e
\usepackage{iclr2027_conference,times}

\usepackage{booktabs}
\usepackage{amsfonts}
\usepackage{nicefrac}
\usepackage{microtype}
\usepackage{xcolor}
\usepackage{graphicx}
\usepackage{textcomp}
\usepackage{enumitem}
\usepackage{multirow}
\usepackage{pifont}
\usepackage{bm}
\usepackage{amsmath}
\usepackage{amssymb}
\usepackage{amsthm}
\usepackage{bbm}
\usepackage{caption}
\usepackage{float}
\usepackage{subcaption}
\usepackage{algorithm}
\usepackage{algorithmic}
\usepackage{wrapfig}
\usepackage{colortbl}
\usepackage{soul}
\usepackage[colorlinks,urlcolor=blue,linkcolor=blue,citecolor=blue]{hyperref}
\usepackage{url}
\usepackage[capitalize,noabbrev,nameinlink]{cleveref}

\crefname{lem}{Lemma}{Lemmas}
\crefname{defin}{Definition}{Definitions}
\crefname{thm}{Theorem}{Theorems}

\newtheorem{theorem}{\bf Theorem}
\newtheorem{lemma}{\bf Lemma}

\newtheorem{definition}{\bf Definition}

\definecolor{lightgray}{gray}{0.9}
\newif\ifchainlorared
\chainloraredfalse

\title{ChainLoRA: Geometry-Preserving Task Vector Merging for Continual Learning in LLMs}

\author{%
\parbox[t]{\dimexpr\textwidth-2\tabcolsep\relax}{%
\raggedright
{\small\mbox{\textbf{Hang Yin\textsuperscript{1,2}, Haozhe Wang\textsuperscript{1,3,4}, Yuhua Luo\textsuperscript{1}, Zhangqi Pan\textsuperscript{5}, Xiaoxing Wang\textsuperscript{6}, Junchi Yan\textsuperscript{1,2,\textdagger}}}}\\[3pt]
{\normalfont\small
\textsuperscript{1}Shanghai Jiao Tong University, \textsuperscript{2}Shanghai Innovation Institute,\\
\textsuperscript{3}Beijing Zhongguancun Academy, \textsuperscript{4}StepFun,\\
\textsuperscript{5}Huazhong University of Science and Technology, \textsuperscript{6}IAAR\\[2pt]
\textsuperscript{\textdagger}Corresponding author}%
}}

\iclrfinalcopy
\hypersetup{pdfauthor={Hang Yin, Haozhe Wang, Yuhua Luo, Zhangqi Pan, Xiaoxing Wang, Junchi Yan},pdftitle={ChainLoRA: Geometry-Preserving Task Vector Merging for Continual Learning in LLMs}}
\begin{document}

\maketitle
\fancyhead{}
\raggedbottom
\begin{abstract}
Continual parameter-efficient fine-tuning for large language models (LLMs) must balance retention of previously acquired knowledge, adaptation to new tasks, and strict parameter budgets.
We present \textbf{ChainLoRA}, a replay-free continual merging
framework built on chain-updated task-vector geometry.
From a parameter-merging perspective, we formulate a geometric
view of forgetting through a measurable interaction between task
updates, separating directional overlap from coefficient coupling.
Building on this view, ChainLoRA combines chain-updated training
with post-stream adaptive SVD merging. During training,
initialization and a one-sided orthogonality proxy use only the
last carrier, keeping their historical-state footprint and
regularization overhead constant as the task stream grows.
At merging time, Adaptive SVD extracts a shared carrier and
aligns it to the latest task through Procrustes adaptation.
Our theoretical analysis shows that Procrustes adaptation facilitates geometric approximate separation of shared and task-specific components. The one-sided proxy further bounds inter-task interference. An effective-rank penalty additionally promotes efficient utilization of the task subspace during continual learning. Experiments show that ChainLoRA achieves state-of-the-art performance among the evaluated replay-free methods on the Large and SuperNI benchmarks, while remaining competitive on Standard CL and attaining almost the closest average scores to the evaluated replay-based method across all three benchmarks.
\end{abstract}

\section{Introduction}
\begin{wrapfigure}{r}{0.43\textwidth} % r右侧，可改l
    \centering
    \vspace{-10pt}
    \includegraphics[width=\linewidth]{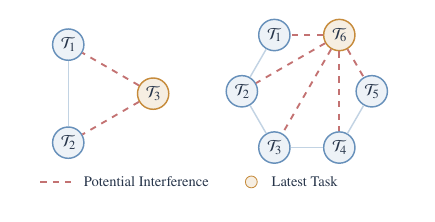}
    \caption{Potential task interference in short CL (left) and long CL (right).}
    \label{fig:inter_task}
\end{wrapfigure}
Foundation models, e.g. large language models (LLMs), have shown remarkable generalization and reasoning abilities across diverse domains. When they are subsequently finetuned on a stream of domains ~\citep{karanikolas2023large}, later updates often overwrite earlier ones---\emph{catastrophic forgetting}~\citep{mccloskey1989catastrophic,liu2024more}. Continual learning~\citep{shi2025continual} studies how to acquire new knowledge without this collapse. \emph{Regularization methods}~\citep{kirkpatrick2017overcoming} keep parameters close to a previous solution, but can hinder adaptation to new tasks. \emph{Replay methods}~\citep{rebuffi2017icarl} revisit old data, which is often unavailable or costly during post-training. \emph{Architectural isolation}~\citep{tong2025any} assigns tasks to separate modules, at the price of extra parameters and weaker sharing~\citep{verma2021efficient}, which may further limit generalization across tasks. These limits raise a concrete question: \emph{Can a CL method avoid catastrophic forgetting without compromising cross-task generalization or relying on data replay?}
\begin{figure*}[tb]
	\centering
	\includegraphics[width=\textwidth,height=0.27\textheight]{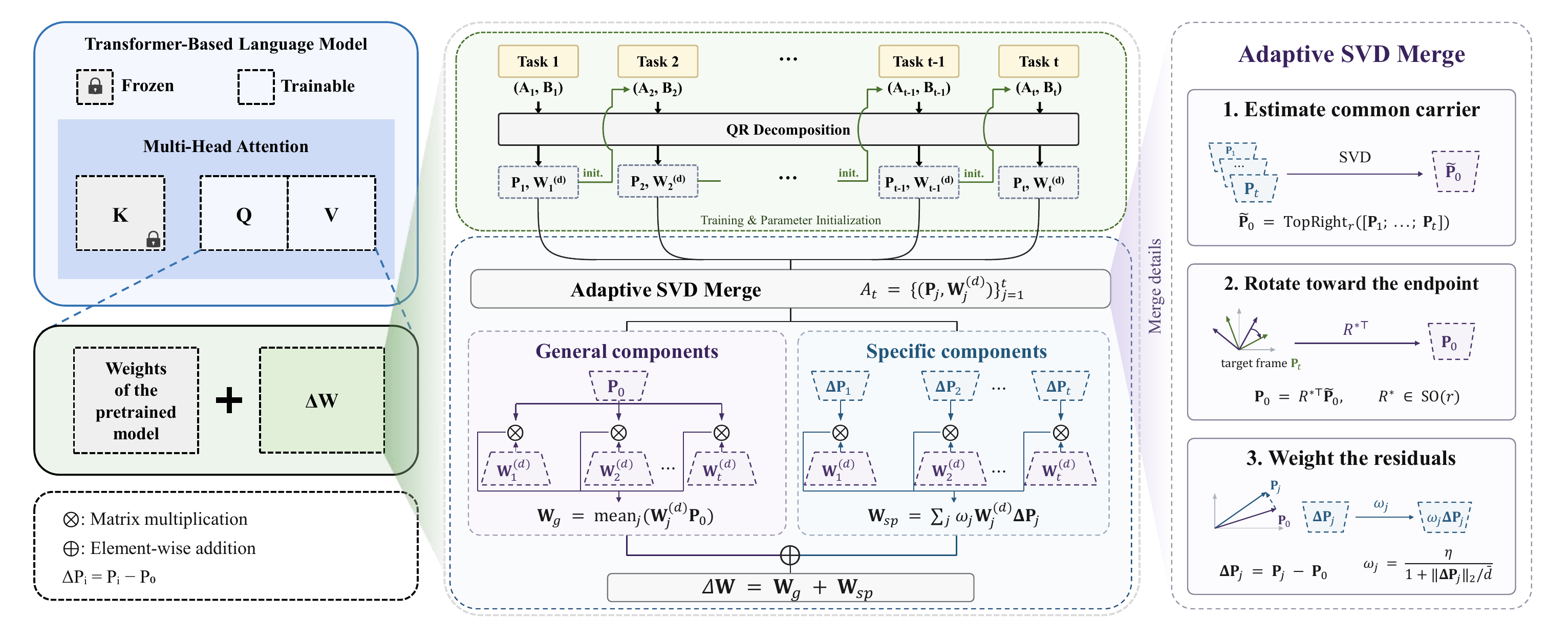}
	\caption{Overview of ChainLoRA. \emph{During} the stream, each update is split by thin QR into a row-orthonormal carrier \(\mathbf P_i\) and magnitude \(\mathbf W_i^{(d)}\); the next task is initialized from the last carrier, and a one-sided proxy loss encourages disentanglement of common knowledge from that of the task-specific knowledge. \emph{After} the stream, Adaptive SVD estimates a shared rank-\(r\) carrier, rotates it in \(\mathrm{SO}(r)\) --- the group of \(r\)-dimensional rotations --- onto the latest-task carrier; residuals are then weighted by their distance from that, as described in Appendix~\ref{app:residual_weights}.}
	\label{fig:framework}
\end{figure*}

Setting replay-based approaches aside, we focus on
\emph{replay-free} continual adaptation over \emph{long} task streams. As tasks accumulate, maintaining previously acquired knowledge while accommodating new tasks becomes increasingly challenging under a limited parameter budget. We consider two representative strategies: orthogonality-based adaptation, exemplified by O-LoRA~\citep{wang2023orthogonal}, and architectural isolation through task-specific modules and routing. These strategies motivate two closely related challenges.

The first challenge is \emph{growing inter-task interference}.
As illustrated in Fig.~\ref{fig:inter_task}, each incoming task
may interact with an increasing number of historical tasks,
creating more opportunities for conflicting updates. OLoRA~\citep{wang2023orthogonal} mitigates such interference by separating task-update directions. However, orthogonality only provides an optimization objective without explicitly addressing how multiple task updates should be merged. Moreover, constraining each new update against previous ones incurs growing training costs. Architectural isolation instead assigns tasks to different modules. Inter-task interference may persist when multiple tasks are routed to shared components with limited parameter capacity.

\begin{wrapfigure}{r}{0.4\textwidth} % r右侧，可改l
    \centering
    \includegraphics[width=\linewidth]{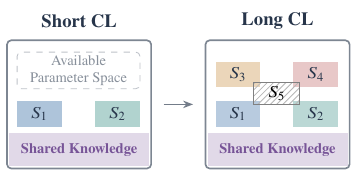}
    \caption{Parameter-space crowding in long-horizon continual
learning: short task streams (left) versus long task streams (right). $S_5$ leads to forgetting.}
    \vspace{-10pt}
    \label{fig:inter_para}
\end{wrapfigure}
The second challenge is \emph{efficient use of limited
adaptation capacity} as Fig.~\ref{fig:inter_para}. OLoRA enforcing orthogonality between entire task updates can be overly restrictive, as different tasks often share transferable knowledge. In the extreme, even splitting the same task into two sequential stages would force their updates into separate directions, unnecessarily consuming additional parameter space. Such over-separation reduces the capacity available for long-term adaptation as more tasks arrive. Architectural isolation faces a similar limitation: allocating new parameters to different tasks primarily postpones capacity saturation through model expansion, rather than improving the reuse of existing capacity. Meanwhile, isolating tasks in separate modules may also limit cross-task generalization.

We treat that question as a problem of \emph{task-vector geometry} rather than of allocating extra modules. Building upon orthogonalization, we further decompose each task update into a shared component and a task-specific component, instead of enforcing orthogonality between entire task vectors. We encourage the task-specific components to be orthogonal to the shared component and to one another, allowing transferable knowledge to occupy shared directions while isolating task-specific variations. This structured decomposition preserves the benefits of orthogonality without unnecessarily separating knowledge that can be reused across tasks.

A well-adopted observation is that a low-dimensional update can match full-parameter fine-tuning~\citep {wang2023orthogonal}. Write \(\mathbf W=\mathbf W_0+\Delta\mathbf W\) with \(\mathbf W_0\) frozen and \(\Delta\mathbf W=\mathbf B\mathbf A\), so CL knowledge lives in a small additive factor. In a continual task stream, that factor is decomposed into tasks and still couples in two ways: the induced directions of different tasks, and their coefficients can activate those. We specifically standardize each update by a thin QR factorization, taking the row-orthonormal factor as the \emph{task vector} (carrier) and the matching coefficients as magnitude.

The method then acts at two timescales, because the objects available during training and after the stream are not the same. \emph{During} the stream, the post-hoc merge carrier is unavailable: only the last task's carrier can be copied forward. Chain-updated initialization starts the next task from the latest. A one-sided proxy loss encourages disentanglement of common knowledge from that of the task-specific knowledge, while a smoothed group penalty on magnitude columns encourages lower-rank updates, preserving adaptation capacity for future tasks and mitigating potential parameter-space crowding. \emph{After} the stream, Adaptive SVD merging estimates a rank-\(r\) shared carrier, aligns to the latest task by Procrustes adaptation, and damps residuals according to their distance from it. Finally, the merged update is absorbed into a single adapter for inference. Highlights of our contribution are:

$\bm{(1)}$ We cast catastrophic forgetting as an operational geometric quantity in merging: each update indicates a carrier with magnitude, naive merging couples them through inner products, and orthogonality is sufficient for weight-space disentanglement. $\bm{(2)}$ On this geometry, we develop ChainLoRA, combining chain-updated training with a post-stream Adaptive SVD merge. The online penalty and initialization use only the last carrier, so historical cost stays constant in the CL stream. $\bm{(3)}$ We provide theoretical analysis showing that Procrustes adaptation facilitates geometric approximate separation of shared and task-specific components, while the one-sided proxy further bounds inter-task interference. $\bm{(4)}$ Experiments indicate strong replay-free performance. On the long-horizon Large stream, ChainLoRA is the strongest among published replay-free LoRA and merging baselines we compare; on shorter Standard CL it approaches replay-augmented methods such as SAPT-LoRA.

\section{Related Work}
\subsection{Continual Learning and Catastrophic Forgetting}
Continual learning studies how a model can learn a sequence of tasks while mitigating \emph{catastrophic forgetting}. Existing methods fall into three families. \textbf{Regularization-based methods} \citep{kirkpatrick2017overcoming,lopez2017gradient,farajtabar2020orthogonal} constrain updates via importance-weighted penalties to preserve past knowledge, but strong constraints can hinder plasticity. \textbf{Replay-based methods} \citep{rebuffi2017icarl,rolnick2019experience,buzzega2020dark,de2019episodic} mix old or pseudo samples with new data; for foundation-model post-training, storing historical data is often infeasible due to privacy or licensing, and generative replay adds overhead. \textbf{Architecture-based (structural isolation) methods} \citep{aljundi2017expert,mallya2018packnet,wang2022learning,verma2021efficient} allocate a separate module per task, reducing interference at the cost of more parameters and less sharing. Our work keeps the isolation but realizes it geometrically: task share a carrier when they can, and otherwise remain as residuals inside one LoRA budget.

Within LoRA-based continual learning, O-LoRA enforces orthogonality in low-rank updates \citep{wang2023orthogonal}, InfLoRA introduces interference-aware subspace control \citep{liang2024inflora}, SAPT-LoRA augments replay via synthetic historical samples \citep{zhao2024sapt}, and recent variants such as LoRM \citep{salami2025closedform}, CorDA \citep{yang2024corda}, and MagMax \citep{marczak2024magmax} explore different constraints on low-rank adaptation. They typically optimize per-task adaptation rather than a unified geometry for continual merging.

\subsection{Low-Rank Adaptation and Parameter-Efficient Fine-Tuning}
Large language models can be adapted with far fewer trainable parameters than the full network. LoRA \cite{hu2022lora} parameterizes the update as $\Delta \mathbf W = \mathbf B \mathbf A$ (with frozen $\mathbf W_0$ and rank $r \ll d$), consistent with the observation that fine-tuning often has low intrinsic dimension \cite{li2018measuring}. These methods reduce cost but typically use a single adapter or LoRA per model (or per task), and do not specify a shared-trunk / residual geometry for composing a sequence of updates.

Parallel to PEFT, recent work studies merging independently trained adapters without original data \citep{ilharco2023editing,zhang2023composing,yadav2023tiesmerging,ortiz2023task,wortsman2022model}, including LoRA-LEGO's structured recombination \citep{zhao2025merging}, KnOTS's SVD-oriented task arithmetic \citep{stoica2025model}, and OPCM for streaming updates \citep{tang2025merging}. Most existing formulations are not specifically designed for asymmetric LoRA factor roles under continual adaptation, which motivates our chain-updated geometry-aware merging design.

\section{Rethinking Catastrophic Forgetting Through Task Vectors}
\label{sec:method}
Inspired by recent findings, e.g., FlyLoRA \citep{zouflylora}, that inter-task orthogonality improves parameter merging, we revisit continual adaptation as the organization of task updates. 

\subsection{Task Vectors and Interaction Bounds}
Given pretrained weights \(\mathbf W_0\), let the rank-\(r\) update induced by task \(\mathcal T_i\) be \(\Delta\mathbf W_i\in\mathbb R^{n\times m}\). We use a thin QR factorization of the transpose,
\[
\Delta\mathbf W_i^{\top}=\mathbf Q_i\mathbf R_i,
\qquad
\mathbf Q_i\in\mathbb R^{m\times r},\quad
\mathbf R_i\in\mathbb R^{r\times n},\quad
\mathbf Q_i^{\top}\mathbf Q_i=\mathbf I_r.
\]

\begin{definition}[Task Vector]
\label{def:task_vector}
Define
\(
\mathbf P_i=\mathbf Q_i^{\top}\in\mathbb R^{r\times m}
\)
and
\(
\mathbf W_i^{(d)}=\mathbf R_i^{\top}\in\mathbb R^{n\times r}.
\)
Then \(\mathbf P_i\mathbf P_i^{\top}=\mathbf I_r\) and
\(\Delta\mathbf W_i=\mathbf W_i^{(d)}\mathbf P_i\).
We call the row-orthonormal carrier \(\mathbf P_i\) the task vector of \(\mathcal T_i\).
\end{definition}

Under the standard QR sign convention (e.g., constraining diagonal entries of \(\mathbf R_i\) to be positive), this decomposition is unique. Therefore, \(\mathbf P_i\) is strongly task-dependent and summarizes dominant task-induced variation directions, while \(\mathbf W_i^{(d)}\) encodes the corresponding magnitudes. The carrier cross-Gram matrix \(\mathbf P_i\mathbf P_j^{\top}\) measures subspace overlap.

\begin{theorem}
\label{thm:vector_orthogonality}
For two tasks \(i,j\), let
\(\Delta\mathbf W_i=\mathbf W_i^{(d)}\mathbf P_i\) and
\(\Delta\mathbf W_j=\mathbf W_j^{(d)}\mathbf P_j\), with row-orthonormal carriers. Then
\[
\left|\langle\Delta\mathbf W_i,\Delta\mathbf W_j\rangle_F\right|
\le
\left\|\mathbf W_i^{(d)\top}\mathbf W_j^{(d)}\right\|_F
\,\cdot\,
\left\|\mathbf P_j\mathbf P_i^{\top}\right\|_F.
\]
Consequently, \(\mathbf P_i\mathbf P_j^{\top}=\mathbf 0\) is sufficient for
\(\langle\Delta\mathbf W_i,\Delta\mathbf W_j\rangle_F=0\). The proof is given in Appendix~\ref{app:task_vector_bound}.
\end{theorem}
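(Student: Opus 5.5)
The plan is to rewrite the Frobenius inner product as a trace and use cyclicity to bring the two magnitude factors together and the two carriers together. By definition,
\[
\langle\Delta\mathbf W_i,\Delta\mathbf W_j\rangle_F
=\operatorname{tr}\!\left(\Delta\mathbf W_i^{\top}\Delta\mathbf W_j\right)
=\operatorname{tr}\!\left(\mathbf P_i^{\top}\mathbf W_i^{(d)\top}\mathbf W_j^{(d)}\mathbf P_j\right).
\]
Cycling the leading factor \(\mathbf P_i^{\top}\) to the end gives
\[
\langle\Delta\mathbf W_i,\Delta\mathbf W_j\rangle_F=\operatorname{tr}\!\left(\mathbf M\,\mathbf P_j\mathbf P_i^{\top}\right),
\qquad \mathbf M:=\mathbf W_i^{(d)\top}\mathbf W_j^{(d)}\in\mathbb R^{r\times r}.
\]
Here \(\mathbf P_j\mathbf P_i^{\top}\in\mathbb R^{r\times r}\) is the carrier cross-Gram matrix. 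The whole interaction is therefore a pairing of two \(r\times r\) matrices: the coefficient coupling \(\mathbf M\) and the directional overlap \(\mathbf P_j\mathbf P_i^{\top}\).

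Next, I would write \(\operatorname{tr}(\mathbf M\mathbf N)=\langle\mathbf M^{\top},\mathbf N\rangle_F\) with \(\mathbf N=\mathbf P_j\mathbf P_i^{\top}\), and apply Cauchy--Schwarz for the Frobenius inner product:
\[
\left|\langle\Delta\mathbf W_i,\Delta\mathbf W_j\rangle_F\right|\le\|\mathbf M^{\top}\|_F\,\|\mathbf P_j\mathbf P_i^{\top}\|_F .
\]
Since transposition preserves the Frobenius norm, \(\|\mathbf M^{\top}\|_F=\|\mathbf M\|_F\), which is exactly the claimed bound. The row-orthonormality of the carriers is not actually needed for the inequality itself. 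It only serves to interpret \(\mathbf P_j\mathbf P_i^{\top}\) as a cosine-type overlap matrix, and it guarantees \(\|\mathbf P_j\mathbf P_i^{\top}\|_F\le\sqrt r\).

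For the consequence, note that \(\mathbf P_i\mathbf P_j^{\top}=\mathbf 0\) holds if and only if its transpose \(\mathbf P_j\mathbf P_i^{\top}=\mathbf 0\). The right-hand side of the bound then vanishes, forcing the inner product to be zero. One can see this even more directly from the trace identity, since the integrand \(\mathbf M\mathbf P_j\mathbf P_i^{\top}\) is then the zero matrix.

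No step here is a genuine obstacle. The only point needing care is the bookkeeping of dimensions and transposes in the cyclic permutation: the product must land on \(\mathbf P_j\mathbf P_i^{\top}\) (an \(r\times r\) matrix) rather than on \(\mathbf P_i^{\top}\mathbf P_j\) (an \(m\times m\) matrix). Landing on the wrong one would give a valid but different bound, not the stated one.
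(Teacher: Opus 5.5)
Your proposal is correct and follows essentially the same route as the paper: trace cyclicity to reach $\operatorname{tr}\bigl(\mathbf W_i^{(d)\top}\mathbf W_j^{(d)}\mathbf P_j\mathbf P_i^{\top}\bigr)$, then the Frobenius Cauchy--Schwarz inequality via $\operatorname{tr}(\mathbf A\mathbf B)=\langle\mathbf A^{\top},\mathbf B\rangle_F$. Your side remarks are also accurate: row-orthonormality is not used in the inequality (the paper does not use it either), and $\mathbf P_i\mathbf P_j^{\top}=\mathbf 0$ is equivalent to its transpose vanishing.
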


\paragraph{Catastrophic Forgetting from a Task-Vector View.}
Prior work on task arithmetic already studies how superposing task updates produces forgetting: when the induced directions are disentangled, adding the updates is approximately an independent superposition, whereas entangled directions let one task's update change another's behavior~\citep{ortiz2023task}. Naive merging is exactly this superposition without considering task entanglement, \(\Delta\mathbf W_{\mathrm{merge}}=\sum_i\Delta\mathbf W_i\)~\citep{ilharco2023editing}. Theorem~\ref{thm:vector_orthogonality} shows that the cross terms vanish whenever \(\mathbf P_i\mathbf P_j^{\top}=\mathbf 0\), and otherwise grow with both subspace overlap and coefficient coupling. Mutual orthogonality of the task vectors is therefore sufficient to remove this entanglement from the overall weight update.

\subsection{Adaptive SVD Linear Merging}
\label{sec:linear_merging}
\paragraph{Linear Merging.} Prior work has highlighted the linearity of model fusion \citep{tang2024parameterefficient}. If coupling is the problem, a merge should isolate a shared term from task residuals rather than add every update in place. A direct sum of LoRA updates mixes transferable directions ($\mathbf W_g$) with per-task ones ($\mathbf W_{\mathrm{sp}}$). Thus, we merge the QR-standardized archive with a shared-residual rule
\begin{equation}
\Delta \mathbf W
=\mathbf W_g+\mathbf W_{\mathrm{sp}}
=\Bigl(\tfrac1T\sum_{i=1}^{T}\mathbf W_i^{(d)}\Bigr)\mathbf P_0
+\sum_{i=1}^{T}\omega_i\mathbf W_i^{(d)}(\mathbf P_i-\mathbf P_0).
\label{eq:merge}
\end{equation}
The merged weights consolidates coefficients on a common carrier \(\mathbf P_0\); the residual term expresses deviations, without orthogonality assumption. Given archived carriers, the map is linear in \(\{\mathbf W_i^{(d)}\}\). Each \(\omega_i\), and the split of \(\mathbf W_g\) from \(\mathbf W_{\mathrm{sp}}\), is set in Appendix~\ref{app:residual_weights}.

\paragraph{SVD specifies the carrier.} We concatenate all task vectors, apply singular value decomposition (SVD). The rank-\(r\) SVD of the stacked carriers is the best coverage of archived directions.
\[
\mathbf X_T
=[\mathbf P_1^\top,\dots,\mathbf P_T^\top]
=\mathbf U\mathbf\Sigma\mathbf V^\top,
\qquad
\widetilde{\mathbf P}_0=\mathbf U_r^\top,
\]

\paragraph{Procrustes adaptation.}
Before the incoming task \(\mathcal T_{T+1}\), a proper rotation \(\mathbf R_*\in\mathrm{SO}(r)\) aligns the current $\mathbf P_0$ to the last-task-ended carrier, so \(\mathbf P_0\) is expressed in the coordinates of \(\mathbf P_T\) rather than in an arbitrary SVD basis.
\begin{equation}
\mathbf P_0=\mathbf R_*^{\top}\widetilde{\mathbf P}_0,
\qquad
\mathbf R_*\in\arg\min_{\mathbf R\in\mathrm{SO}(r)}
\|\mathbf R^{\top}\widetilde{\mathbf P}_0-\mathbf P_T\|_F^2,
\label{eq:adapt_svd_alignment}
\end{equation}
A rank-\(r\) SVD specifies a covering subspace, not a unique coordinate frame --- \(\widetilde{\mathbf P}_0\) can be replaced by any \(\mathrm{SO}(r)\) rotation of itself. The only sequentially meaningful index is the task just completed: that endpoint \(\mathbf P_T\) is the carrier we have just obtained and are able to keep its high performance. Then, we seek earlier carriers orthogonal to it to preserve former tasks' performance. In Section~\ref{sec:chain}, we build \textit{task geometry} for that. Thm.~\ref{thm:similar_carriers} shows that this adaptation mitigates forgetting under the geometry.

\begin{theorem}
\label{thm:similar_carriers}
Let \(\mathbf P\) be a common carrier after Procrustes adaptation and \(\mathbf Q\) a just-learned task vector. Their difference is \(\mathbf E=\mathbf Q-\mathbf P\), which is also the task-specific knowledge. Denote the loss by  
\begin{equation}
\mathcal L_g^i
=\frac{\|\mathbf P\mathbf E^{\top}\|_F^2}{\|\mathbf E\|_F^2},
\qquad\mathbf E\neq\mathbf 0.
\label{eq:lg_def}
\end{equation}
Procrustes alignment minimizes \(\|\mathbf E\|_F\le\eta\), which implies \(\mathcal L_g^i\le\eta^2/4\), supporting an approximate geometric separation between task-specific knowledge and common carrier. Proof in Appendix~\ref{app:stiefel_loss}.
\end{theorem}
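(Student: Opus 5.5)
The plan is to turn the ratio $\mathcal L_g^i$ into a purely algebraic quantity in $\mathbf E$, using only row-orthonormality and the first-order optimality condition of the Procrustes problem \eqref{eq:adapt_svd_alignment}. Here $\mathbf P=\mathbf R_*^{\top}\widetilde{\mathbf P}_0$ is the aligned common carrier and $\mathbf Q=\mathbf P_T$ is the just-learned task vector. Both satisfy $\mathbf P\mathbf P^{\top}=\mathbf Q\mathbf Q^{\top}=\mathbf I_r$, by \cref{def:task_vector} and because a rotation preserves row-orthonormality. Write $\mathbf M=\mathbf P\mathbf Q^{\top}\in\mathbb R^{r\times r}$.

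\emph{Step 1 (algebraic identities).} From $\mathbf E=\mathbf Q-\mathbf P$ we get $\mathbf P\mathbf E^{\top}=\mathbf M-\mathbf I_r$. Expanding the Gram matrix gives
\[
\mathbf E\mathbf E^{\top}=2\mathbf I_r-\mathbf M-\mathbf M^{\top}.
\]
Hence the symmetric part of $\mathbf P\mathbf E^{\top}$ is fixed by $\mathbf E$ alone:
\[
\mathbf P\mathbf E^{\top}+\mathbf E\mathbf P^{\top}=-\mathbf E\mathbf E^{\top}.
\]

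\emph{Step 2 (Procrustes optimality gives symmetry).} Since $\|\mathbf R^{\top}\widetilde{\mathbf P}_0\|_F^2=r$ is constant, minimizing \eqref{eq:adapt_svd_alignment} is equivalent to maximizing $\operatorname{tr}(\mathbf R^{\top}\widetilde{\mathbf P}_0\mathbf Q^{\top})$ over $\mathrm{SO}(r)$. Perturb the optimum as $\mathbf R_*\exp(\mathbf S)$ with $\mathbf S$ skew-symmetric; these curves cover the tangent space of $\mathrm{SO}(r)$. Stationarity then gives $\operatorname{tr}(\mathbf S^{\top}\mathbf M)=0$ for every skew $\mathbf S$. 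Therefore $\mathbf M$ has no skew-symmetric part, i.e.\ $\mathbf M=\mathbf M^{\top}$.

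This is the step I expect to be the main obstacle. One must make sure the restriction to $\mathrm{SO}(r)$ rather than $\mathrm{O}(r)$ does not spoil the argument. It does not, because only the tangent directions are used and these are the same skew matrices for both groups. I will also note that the conclusion needs only stationarity, not global optimality, so sign-flip issues in the SVD-based solution are harmless.

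\emph{Step 3 (conclude).} With $\mathbf M$ symmetric, $\mathbf P\mathbf E^{\top}=\mathbf M-\mathbf I_r$ is symmetric. Step 1 then yields $\mathbf P\mathbf E^{\top}=-\tfrac12\mathbf E\mathbf E^{\top}$. By submultiplicativity of the Frobenius norm,
\[
\|\mathbf P\mathbf E^{\top}\|_F=\tfrac12\|\mathbf E\mathbf E^{\top}\|_F\le\tfrac12\|\mathbf E\|_F^2.
\]
Dividing the square of this bound by $\|\mathbf E\|_F^2$ gives
\[
\mathcal L_g^i\le\tfrac14\|\mathbf E\|_F^2\le\tfrac{\eta^2}{4},
\]
using the hypothesis $\|\mathbf E\|_F\le\eta$, where $\eta$ is the minimized Procrustes residual. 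Thus the common-carrier component of the task-specific residual is second order in $\eta$, which is the claimed approximate geometric separation.
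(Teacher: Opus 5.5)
Your proof is correct and follows the paper's overall skeleton. Both arguments show that the aligned cross-Gram $\mathbf P\mathbf Q^{\top}$ is symmetric. Both then combine this with the row-orthonormality identity $\mathbf P\mathbf E^{\top}+\mathbf E\mathbf P^{\top}=-\mathbf E\mathbf E^{\top}$ to get $\mathbf P\mathbf E^{\top}=-\tfrac12\mathbf E\mathbf E^{\top}$, and finish by bounding $\|\mathbf E\mathbf E^{\top}\|_F$ by $\|\mathbf E\|_F^2$.

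Where you genuinely differ is in how symmetry is obtained. The paper writes down the closed-form SVD optimizer. With $\mathbf P\mathbf Q^{\top}=\mathbf U\boldsymbol\Sigma\mathbf V^{\top}$, it takes $\mathbf R=\mathbf V\mathbf U^{\top}$ or $\mathbf R=\mathbf V\mathbf D\mathbf U^{\top}$ according to the sign of $\det(\mathbf V\mathbf U^{\top})$. It then checks directly that the aligned cross-Gram is $\mathbf U\boldsymbol\Sigma\mathbf U^{\top}$ or $\mathbf U\boldsymbol\Sigma\mathbf D\mathbf U^{\top}$. You instead use the first-order condition along $\mathbf R_*\exp(t\mathbf S)$ with $\mathbf S$ skew.

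What each route buys:
\begin{itemize}
\item Your route avoids the determinant case split and needs no knowledge of the optimizer's form. It also shows symmetry at any critical point of the alignment objective, not only at the global optimum.
\item The paper's explicit route gives more information: the spectrum of the aligned cross-Gram.
\item The paper's singular-value bookkeeping via $\sum_k s_k^4$ also yields the matching lower bound $\mathcal L_g^i\ge\frac{1}{4r}\|\mathbf E\|_F^2$. Your submultiplicativity step does not give this, though the theorem only asks for the upper bound.
\end{itemize}

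A small point in your favor: you rotate $\widetilde{\mathbf P}_0$ onto $\mathbf P_T$ exactly as in Eq.~\eqref{eq:adapt_svd_alignment}. The appendix proof rotates $\mathbf Q$ onto $\mathbf P$ instead. The symmetry conclusion is the same either way.
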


\section{Chain-updated Learned Task Geometry}
\label{sec:chain}
The merge above assumes a finished archive.
Continual learning must produce that archive online: each task's data is
available only during its own training stage, with \textit{no replay} of earlier
tasks. Three choices follow: \textit{first}, the next carrier is initialized from the
previous one, so training starts at the inherited carrier rather than from a random basis; \textit{second}, a one-sided proxy regularizes displacement of the
current task vector from the last-task carrier, which caches a \emph{detached} copy
of the current merge and adds it (stop-grad) on the next task's unmerged
forward; \textit{third}, after each task the pair
\((\mathbf P_t,\mathbf W_t^{(d)})\) is archived for the eventual merge.

\begin{algorithm}[h]
\caption{ChainLoRA}
\label{alg:chainlora}
\begin{algorithmic}[1]
\REQUIRE Base model weights $\mathbf W_{0}$; tasks $\{\mathcal{T}_t\}_{t=1}^{T}$;
\STATE Initialize LoRA: kaiming\_uniform($\mathbf A_1$), $\mathbf B_1=0$; bake cache empty.
\STATE \textbf{Function} $\mathrm{GetProjAndCoeff}(\mathbf A,\mathbf B)$:
\STATE \quad $\mathbf A^\top=\mathbf Q\mathbf R,\ 
\mathbf S\gets\operatorname{diag}\!\bigl(\operatorname{sign}(\operatorname{diag}(\mathbf R))\bigr),\ 
\widehat{\mathbf Q}\gets\mathbf Q\mathbf S,\ \widehat{\mathbf R}\gets\mathbf S\mathbf R$
\STATE \quad $\mathbf P\gets\widehat{\mathbf Q}^\top,\ {\mathbf W}^{(d)}\gets\mathbf B \widehat{\mathbf R}^\top$
\STATE \quad \textbf{return} $(\mathbf P, {\mathbf W}^{(d)})$
\FOR{$t = 1$ to $T$}
    \STATE Train on task $\mathcal{T}_t$, obtain: $\mathbf{A}_{t}$, $\mathbf{B}_{t}$. \\
    \STATE $(\mathbf P_t, {\mathbf W_t}^{(d)}) \gets \mathrm{GetProjAndCoeff}(\mathbf{A}_{t}, \mathbf{B}_{t})$
    \STATE Merge the current archive for evaluation; if bake is enabled, cache its detached update for next-task training.
    \STATE Initialize $\mathbf A_{t+1}\gets\mathbf P_t$, $\mathbf B_{t+1}\gets 0$
\ENDFOR
\STATE For $T>1$, estimate $\mathbf P_0$ by SVD and Eq.~\eqref{eq:adapt_svd_alignment}.
\STATE Compute $\omega_i$, $\mathbf W_g$ and $\mathbf W_{\mathrm{sp}}$ (Appendix~\ref{app:residual_weights}).
\STATE $\Delta\mathbf W\gets (\alpha/r)\mathbf W_g+\mathbf W_{\mathrm{sp}}$; for $T=1$, use $\mathbf B_1\mathbf A_1$.
\RETURN Final model weights $\mathbf W_0+\Delta \mathbf W$
\end{algorithmic}
\end{algorithm}

\paragraph{Chain-updated initialization.}
Task \(1\) starts from a random \(\mathbf P_1\) and \(\mathbf W_1^{(d)}=\mathbf 0\). For \(i\ge 2\) we set \(\mathbf P_i\leftarrow\mathbf P_{i-1}\) and \(\mathbf W_i^{(d)}\leftarrow\mathbf 0\), then optimize raw factors \(\{\mathbf P_i,\mathbf W_i\}\) on task \(i\) and archived them. The next carrier therefore begins at the previous one.

\begin{theorem}[]
\label{thm:normalized_interference}
\label{eq:surrogate_bound}
Let \(\mathbf E_i=\mathbf P_i-\mathbf P_{i-1}\neq\mathbf 0\), let each earlier increment \(\mathbf E_j\neq\mathbf 0\), and write
\(\widehat{\mathbf E}_k=\mathbf E_k/\|\mathbf E_k\|_F\).
Assume \(\mathbf H_{i-1}\) is row-orthonormal. For every \(j<i\), define the new trunk-growth overlap and the earlier-growth
trunk gap
\[
\epsilon_i
=\bigl\|\mathbf H_{i-1}\widehat{\mathbf E}_i^{\top}\bigr\|_F,
\qquad
\delta_{j\mid i-1}
=\bigl\|\widehat{\mathbf E}_j(\mathbf I-\mathbf\Pi_{i-1})\bigr\|_F,
\]
and the corresponding normalized trunk loss
\begin{equation}
\mathcal L_{g}^{i}
=\frac{\|\mathbf H_{i-1}\mathbf E_i^{\top}\|_F}{\|\mathbf E_i\|_F}
=\epsilon_i.
\end{equation}
Then
\begin{equation}
\bigl\|\widehat{\mathbf E}_j\widehat{\mathbf E}_i^{\top}\bigr\|_F
\le
\mathcal L_{g}^{i}
+\delta_{j\mid i-1}.
\label{eq:interference_bound}
\end{equation}
\end{theorem}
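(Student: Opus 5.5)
The plan is to interpret \(\mathbf\Pi_{i-1}=\mathbf H_{i-1}^{\top}\mathbf H_{i-1}\) as the orthogonal projector onto the row space of the trunk \(\mathbf H_{i-1}\). This projector is symmetric and idempotent precisely because \(\mathbf H_{i-1}\) is assumed row-orthonormal. Using it, I split the earlier normalized increment into a trunk part and an off-trunk part:
\[
\widehat{\mathbf E}_j=\widehat{\mathbf E}_j\mathbf\Pi_{i-1}+\widehat{\mathbf E}_j(\mathbf I-\mathbf\Pi_{i-1}).
\]
Multiplying on the right by \(\widehat{\mathbf E}_i^{\top}\) and applying the triangle inequality then gives
\[
\bigl\|\widehat{\mathbf E}_j\widehat{\mathbf E}_i^{\top}\bigr\|_F\le\bigl\|\widehat{\mathbf E}_j\mathbf H_{i-1}^{\top}\mathbf H_{i-1}\widehat{\mathbf E}_i^{\top}\bigr\|_F+\bigl\|\widehat{\mathbf E}_j(\mathbf I-\mathbf\Pi_{i-1})\widehat{\mathbf E}_i^{\top}\bigr\|_F .
\]
The remaining work is to bound each of these two terms separately.

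For the first (trunk) term, I use submultiplicativity in the form \(\|\mathbf A\mathbf B\|_F\le\|\mathbf A\|_2\|\mathbf B\|_F\). I take \(\mathbf A=\widehat{\mathbf E}_j\mathbf H_{i-1}^{\top}\) and \(\mathbf B=\mathbf H_{i-1}\widehat{\mathbf E}_i^{\top}\). The operator norm satisfies \(\|\widehat{\mathbf E}_j\mathbf H_{i-1}^{\top}\|_2\le\|\widehat{\mathbf E}_j\|_2\|\mathbf H_{i-1}\|_2\le\|\widehat{\mathbf E}_j\|_F\cdot 1=1\). This uses two facts: a row-orthonormal matrix has spectral norm \(1\), and \(\widehat{\mathbf E}_j\) has unit Frobenius norm. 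The first term is therefore at most \(\|\mathbf H_{i-1}\widehat{\mathbf E}_i^{\top}\|_F=\epsilon_i\). The identity \(\epsilon_i=\mathcal L_g^i\) holds by homogeneity of the norm, since \(\mathbf E_i\neq\mathbf 0\).

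For the second (off-trunk) term, I bound it by \(\|\widehat{\mathbf E}_j(\mathbf I-\mathbf\Pi_{i-1})\|_F\,\|\widehat{\mathbf E}_i^{\top}\|_2\). Since \(\|\widehat{\mathbf E}_i\|_2\le\|\widehat{\mathbf E}_i\|_F=1\), this is at most \(\delta_{j\mid i-1}\). Adding the two bounds yields \eqref{eq:interference_bound}.

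The only delicate step is making sure the norm inequalities are applied in the right mixed form, with one spectral factor and one Frobenius factor. Bounding both factors in Frobenius norm would lose a factor of \(\sqrt r\) from \(\|\mathbf H_{i-1}\|_F\) and break the clean bound. Beyond that, I must state explicitly the convention that \(\mathbf\Pi_{i-1}\) is the projector built from \(\mathbf H_{i-1}\), since the statement leaves this implicit. No earlier theorem is needed, though the argument mirrors the Cauchy–Schwarz structure of Theorem~\ref{thm:vector_orthogonality}.
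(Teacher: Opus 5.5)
Your proof is correct and follows the paper's route: split $\widehat{\mathbf E}_j$ with $\mathbf\Pi_{i-1}=\mathbf H_{i-1}^{\top}\mathbf H_{i-1}$, apply the triangle inequality, and bound the trunk and off-trunk terms by submultiplicativity. Your mixed bound $\|\mathbf A\mathbf B\|_F\le\|\mathbf A\|_2\|\mathbf B\|_F$ in the trunk term is the right way to get the factor $1$. The paper instead writes $\|\widehat{\mathbf E}_j\mathbf H_{i-1}^{\top}\|_F\le\|\widehat{\mathbf E}_j\|_F\|\mathbf H_{i-1}\|_F=1$, which is a slip since $\|\mathbf H_{i-1}\|_F=\sqrt r$; using $\|\mathbf H_{i-1}\|_2=1$ as you do repairs it.
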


\paragraph{Chain-updated one-sided proxy.}
Before task \(i\), the post-stream reference \(\mathbf P_0\) is
unavailable. The inherited carrier \(\mathbf H_{i-1}=\mathbf P_{i-1}\)
is therefore the only trunk the trainer can use. Therefore, we use \(\mathbf H_{i-1}\) as an effective surrogate.
\begin{equation}
\mathcal L_{g}^i=\sum_{i=1}^{T}\left\|\mathbf P_0\Delta \mathbf P_i^\top\right\|_F^2
\approx \sum_{i=2}^{T}\left\|\mathbf H_{i-1}(\mathbf P_{i}-\mathbf P_{i-1})^\top\right\|_F^2
\label{eq:shared_orth_loss}
\end{equation}

To reason about interference along this chain, write
\(\mathbf E_i=\mathbf P_i-\mathbf P_{i-1}\) for the local change of the
carrier after task \(i\), which is a main carrier rotation. Before task \(i\), \(\mathbf H_{i-1}\) passes a fixed-rank historical information. By Theorem~\ref{eq:surrogate_bound}, we could imply that: the orthogonality term of $\Delta \mathbf P_i\Delta \mathbf P_j^\top,\ \forall i\neq j$ could also be minimized by $\mathcal L_{g}^i$. The proof is given in Appendix~\ref{app:normalized_interference}.

\section{When LoRA Meets Task Geometry}
\label{sec:lora}
We now instantiate the geometry with LoRA factors: how \(\mathbf P_i\) and \(\mathbf W_i^{(d)}\) are computed, what is optimized online, and how the archive is merged. The overall procedure is Algorithm~\ref{alg:chainlora}. 

\subsection{\texorpdfstring{LoRA Corresponds of \(\mathbf P_i\) and \(\mathbf W_i^{(d)}\)}{LoRA Corresponds of Pi and Wi}}
\label{sec:lora_correspondence}
For task \(i\), we use LoRA factors \(\mathbf A_i\in\mathbb R^{r\times m}\), \(\mathbf B_i\in\mathbb R^{n\times r}\), and define $\Delta\mathbf W_i=\mathbf B_i\mathbf A_i.$
To align this with \(\Delta\mathbf W_i=\mathbf W_i^{(d)}\mathbf P_i\), we perform thin QR on \(\mathbf A_i^\top\):
\[
\mathbf A_i^\top=\mathbf Q_i\mathbf R_i,\quad
\mathbf S_i=\operatorname{diag}\!\bigl(\operatorname{sign}(\operatorname{diag}(\mathbf R_i))\bigr),\quad
\widehat{\mathbf Q}_i=\mathbf Q_i\mathbf S_i,\quad
\widehat{\mathbf R}_i=\mathbf S_i\mathbf R_i.
\]
Then we have $\mathbf P_i=\widehat{\mathbf Q}_i^\top$, $\mathbf W_i^{(d)}=\mathbf B_i\widehat{\mathbf R}_i^\top$. So that \(\Delta\mathbf W_i=\mathbf W_i^{(d)}\mathbf P_i\) holds exactly.

\paragraph{Improving Parameter Efficiency by Effective-Rank Reduce.} We also constrain the Lora parameters with an explicit low-rank surrogate. Since each column of $\mathbf W^{(d)}$ and each row of $\mathbf P$ corresponds to one latent adaptation direction, shrinking column norms encourages some directions to vanish, thereby reducing the effective rank and minimizing perturbation to the pretrained backbone. Since \(\mathbf P\) is an orthogonal matrix, its contribution becomes a constant term and does not affect optimization; therefore, the final loss only contains terms \(w\).
\[
\Delta \mathbf W_i =\mathbf P_i \mathbf W_i^{(d)} = \sum_{k=1}^{r} p_k w_k ^\top,\quad
\mathcal L_{\mathrm{rank}}^i
= \sum_{k=1}^{r}\left(\|p_k\|_2^2 + \|w_k\|_2^2\right) = \sum_{k=1}^{r}\|w_k\|_2^2
\]

\paragraph{Initialization.}
\(\mathbf A_1\sim\mathrm{Kaiming}\) and \(\mathbf B_1=\mathbf 0\). For \(i\ge2\), we use $\mathbf A_i\leftarrow \mathbf P_{i-1},\,
\mathbf B_i\leftarrow \mathbf 0.$

\paragraph{Per-task total loss.}
At step of training task \(i\), we optimize $\mathcal L^{i}=\mathcal L_t^i+\lambda_1\mathcal L_g^i+\lambda_2\mathcal L_{\mathrm{rank}}^i$. After all tasks are learned, updates are merged using Eq.~\ref{eq:merge}.

\paragraph{Merging Phase.} According to Eq.~\ref{eq:merge}, we merge the previous tasks parameters and gets the final $\Delta \mathbf W$. A key point for merging take effect is the chain-update task geometry. That orthogonality of tasks induced by $\mathcal L_g^i$ lets the merge take effect, and chain-updated initialization carries forward the common carrier. The merge depends on both; without them it need not work.

\paragraph{Remarks.} Moreover, our formulation can be viewed as a strict generalization of OLoRA-style merging. Specifically, Eq.~\ref{eq:merge} explicitly decomposes updates into a shared component (through \(\mathbf P_0\)) and a task-specific residual component. When \(\mathbf P_0=\mathbf 0\), the shared term vanishes and Eq.~\ref{eq:merge} reduces to $\Delta \mathbf W=\sum_{i=1}^{T}\mathbf P_i\mathbf W_i^{(d)}=\sum_{i=1}^{T}\mathbf A_i\mathbf B_i$, which recovers the OLoRA-like merge pattern that only aggregates task-specific projected updates.

\section{Experiments}
\subsection{Experimental Setup}
\noindent\textbf{\textit{Base Models.}}~We evaluate our approach across three models: Llama-2-7B-chat, Llama-2-13B-chat, and Llama-3.2-3B. The 3B checkpoint differs from Llama-2 in family as well as size. Although this comparison does not control variables in the strictest sense, it still shows that the method transfers across model families and across sizes. All experiments were conducted on four RTX~4090 GPUs. The learning rate is $2\times10^{-4}$. Each task in the stream is trained for two epochs by LoRA with bake training as in Appendix~\ref{app:bake}. The geometry loss is extended in multi-layers as in Appendix~\ref{app:runtime_objective}.\\
\noindent\textbf{\textit{Datasets.}}~We consider three continual learning benchmarks: (1) \textit{Standard CL benchmark}: AG News, Amazon Reviews, DBpedia, and Yahoo Answers. (2) \textit{Large number of tasks}: five Standard CL benchmark tasks, four GLUE tasks (MNLI, QQP, RTE, SST-2), five SuperGLUE tasks (WiC, CB, COPA, MultiRC, BoolQ), and IMDB movie reviews. Following {\sffamily{O-LoRA}}~\citep{wang2023orthogonal}, each task uses 1000 randomly sampled training samples and 500 validation samples per class. (3) \textit{SuperNI Benchmark}~\citep{wang2022super}: SuperNI is an instruction-following mix of generation and classification, unlike the classification streams that follow. A diverse collection of NLP tasks with expert-written instructions, covering dialogue generation, information extraction, question answering, summarization, and sentiment analysis. We follow task selection and ordering in {\sffamily{SAPT}}~\citep{zhao2024sapt}, using 1,000 training instances and 100 for validation/testing per task.\\
\noindent\textbf{\textit{Baselines.}}~We compare our method with the following baselines: (1) \textit{Continual learning baselines:} {\sffamily{SeqLoRA}}: sequentially fine-tunes a single LoRA on multiple tasks without constraints; {\sffamily{IncLoRA}}: incrementally adds a new LoRA per task while freezing previous LoRAs; {\sffamily{O-LoRA}}~\citep{wang2023orthogonal}; {\sffamily{InfLoRA}}~\citep{liang2024inflora}; {\sffamily{SAPT-LoRA}}~\citep{zhao2024sapt};  {\sffamily{LoRM}}~\citep{salami2025closedform}; {\sffamily{CorDA}} (knowledge-preserved adaptation)~\citep{yang2024corda}; {\sffamily{MagMax}}~\citep{marczak2024magmax}. (2) \textit{LoRA merging baselines:} {\sffamily{LoRA-LEGO}}~\citep{zhao2025merging}; {\sffamily{KnOTS}}~\citep{stoica2025model}. (3) \textit{Continual merging baseline:} {\sffamily{OPCM}}~\citep{tang2025merging}. % \\
\begin{table}[H]
  \caption{Final average task scores (\%) for Llama-2-7B-chat ($Oi$=order). Only the replay-free method is compared. Methods are listed in the order described above, while the two merging strategies of LORM are listed separately.}
  \label{tab:llama7b_slao_overall}
  \centering
  \resizebox{\textwidth}{!}{
    \begin{tabular}{c|c c c c|c c c c| c c c}
    \toprule
        & \multicolumn{4}{c|}{\textbf{Standard CL Benchmark}} 
         & \multicolumn{4}{c|}{\textbf{Large Number of Tasks}} & \multicolumn{3}{c}{\textbf{SuperNI Benchmark}} \\
        \midrule
        \textbf{Method} & \textbf{O1} & \textbf{O2} & \textbf{O3} & \textbf{avg} 
        & \textbf{O4} & \textbf{O5} & \textbf{O6} & \textbf{avg} & \textbf{O1} & \textbf{O2} & \textbf{avg}\\
        \midrule
        SeqLoRA&73.3&76.2&78.4&76.0&69.1&66.0&71.1&68.7&18.4&26.8&22.6\\
        IncLoRA&75.3&77.3&78.3&77.0&72.2&71.6&73.8&72.5&22.0&25.6&23.8\\
        O-LoRA & 76.1 & 76.3 & 79.2 &77.2& \underline{74.0} & 72.0& \underline{74.6}&\underline{73.5}& 23.3&28.4&25.9\\
        InfLoRA & 78.4 & \underline{80.4} & 79.9 & 79.6& 69.4& 67.4&72.5&69.8&16.5 & 22.1 & 19.3\\
        \midrule
        LoRM-BA&76.0&76.8&78.3& 77.0&71.4&69.0&70.3&70.2&25.6&18.7&22.2\\
        LoRM-AB&77.5&74.7&75.9& 76.0& 71.0&69.5&70.2&70.2&25.6&23.7&24.7\\
        \midrule
        CorDA & 78.4&79.3&80.0&79.2&73.4&72.7&74.0&73.4&20.9&16.0&18.5\\
       MagMax &\underline{80.1} & \textbf{80.6}& \textbf{80.3}& \textbf{80.3}& 72.3&\underline{73.5}&74.5&73.4&15.3&7.0&11.2\\
        \midrule
        OPCM &61.9&62.0&56.7&60.2&51.9&52.8&46.9&50.5& 11.6&12.3&12.0\\
        \midrule
        KnOTS&67.9&65.9&70.8&68.2 &61.5&60.1&58.0&59.9&\textbf{34.6}&\underline{30.1}&\underline{32.4}\\
        LoRA-LEGO& 68.3 & 66.0& 70.9 &68.4 &58.8 &58.7 & 53.2&56.9 & \underline{32.8}&26.7&29.8\\
        \rowcolor{gray!20}
        ChainLoRA (ours) &\textbf{80.5} &79.6 &\underline{80.2} & \underline{80.1} &\textbf{77.3} &\textbf{75.8} &\textbf{74.6}&\textbf{75.5} &29.8 &\textbf{37.8} &\textbf{33.8} \\
        \midrule
        (Replayed) SAPT& 82.9&81.8&78.7&81.1& 84.7&78.9&82.2&81.9&53.2&48.5&50.9\\
       \bottomrule
    \end{tabular}
    }
\end{table}
\noindent\textbf{Evaluation metrics. }Every suite reports average accuracy (AA) $\frac{1}{T}\sum_{i=1}^{T}a_{i,T}$ after the last task. Classification sets $a_{i,T}$ by exact match; SuperNI is evaluated by Rouge-L first, and AA is the average of those scores. Backward transfer (BWT) $\frac{1}{T-1}\sum_{i=1}^{T-1}(a_{i,T}-a_{i,i})$, is recorded in Appendix~\ref{app:additional_experiments}.

\subsection{Overall Results}
On the short Standard stream, our method approaches the best results alongside MagMax and InfLoRA, and replay-augmented SAPT-LoRA leads only slightly. The replay lead is much clearer on long streams: visible on Large, and largest on SuperNI, where SAPT sits far above the replay-free rows. Even there, ChainLoRA is closest to SAPT, which further shows that the method is strong.

\begin{wrapfigure}{r}{0.48\textwidth}
    \centering
    \vspace{-15pt}
    \includegraphics[width=\linewidth,height=0.12\textheight,keepaspectratio]{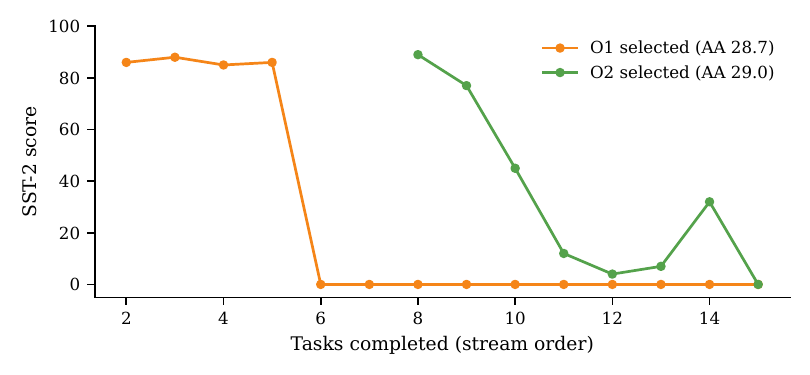}
    \caption{SuperNI SST-2 collapsed.}
    \label{fig:sni_sst2}
    \vspace{-20pt}
\end{wrapfigure}
SST-2 is a task in the SuperNI stream. It is learned, then not retained: after later tasks, the score falls to 0 (Figure~\ref{fig:sni_sst2}). On O1 it appears early, at position 2, and is already 0 by task 6. On O2 it appears at position 8 and finishes at 0 on the last task. ChainLoRA still sits above the listed merging baselines (KnOTS, LoRA-LEGO). We also believe this task is the opening for our follow-up work.

\subsection{Ablation Study}
\vspace{-1.5mm}

\textbf{Effect of each module.}
Table~\ref{tab:component_ablation_matched} asks which piece matters on which suite. On the short Standard stream only a few tasks are merged, so the merge rule moves the score the most: removing Adaptive SVD drops AA by $5.0$. Longer streams already carry some forgetting, and the same removal costs less, $2.3$ on Large and $1.3$ on SuperNI. The online losses go the other way. They stay small while the tasks are short classification problems, then grow once the stream is longer and the tasks are more mixed: on SuperNI, removing $\mathcal L_g^i$ costs $6.2$ and removing the rank loss costs $6.3$. Keeping chain-updated initialization is necessary on that stream as well. Without it, SuperNI falls by $3.1$.  
\begin{table}[H]
    \caption{Ablation study of each modules as Table~\ref{tab:llama7b_slao_overall}.}
  \label{tab:component_ablation_matched}
  \centering
  \resizebox{\textwidth}{!}{
  \begin{tabular}{lcccccc}
    \toprule
    \textbf{Variant}
      & \textbf{Standard CL avg} & $\Delta$
      & \textbf{Large avg} & $\Delta$
      & \textbf{SuperNI avg} & $\Delta$ \\
    \midrule
    \rowcolor{gray!20}
    Full (Table~\ref{tab:llama7b_slao_overall} recipe)
      & 80.1 & --
      & 75.5 & --
      & 33.8 & -- \\
    w/o $\mathcal L_g^i$ ($\lambda_1{=}0$)
      & 79.3 & $-0.8$
      & 74.4 & $-1.1$
      & 27.6 & $-6.2$ \\
    w/o $\mathcal L_{\mathrm{rank}}^i$ ($\lambda_{2}{=}0$)
      & 79.0 & $-1.1$
      & 75.4 & $-0.1$
      & 27.5 & $-6.3$ \\
    w/o Adaptive SVD Merging
      & 75.1 & $-5.0$
      & 73.2 & $-2.3$
      & 32.5 & $-1.3$ \\
    w/o Chain-updated initialization
      & 78.2 & $-1.9$
      & 74.8 & $-0.7$
      & 30.7 & $-3.1$ \\
    \bottomrule
  \end{tabular}
  }
\end{table}

\paragraph{Varying the merge.}
Table~\ref{tab:assymetry} compares three ways to compose the same archive. Average adds the task updates with equal weight. SVD keeps only the rank-$r$ carrier and leaves that basis where the decomposition put it. Adaptive SVD is the full rule: the same carrier, then rotated onto $\mathbf P_T$ by Procrustes. SVD without that alignment collapses. Standard falls to $55.6$, SuperNI to $11.3$, and Large O4/O5 finish at $0$. The SVD factor is a covering subspace, not the coordinate frame of $\mathbf P_T$. Without that alignment, carrying the basis straight into the merge makes the result change with the task order, which continual learning cannot accept. Average already keeps a usable score ($65.0$ / $67.6$ / $28.8$). Different tasks move in different directions in the update, so a direct sum still carries each of them; the geometric loss further puts those updates in a nearly additive state, which is why the naive sum works. Adaptive SVD remains higher row across the three suites.

\begin{table}[H]
  \caption{Merging ablation study. $\dagger$ marks results that ended in catastrophic forgetting (AA $=0$).}
  \label{tab:assymetry}
  \centering
  \resizebox{\textwidth}{!}{
    \begin{tabular}{c|c c c c|c c c c| c c c}
    \toprule
         & \multicolumn{4}{c|}{\textbf{Standard CL Benchmark}}
         & \multicolumn{4}{c|}{\textbf{Large Number of Tasks}} & \multicolumn{3}{c}{\textbf{SuperNI Benchmark}} \\
         \midrule
      \textbf{Merging} & \textbf{O1} & \textbf{O2} & \textbf{O3} & \textbf{avg}
        & \textbf{O4} & \textbf{O5} & \textbf{O6} & \textbf{avg} & \textbf{O1} & \textbf{O2} & \textbf{avg}\\
        \midrule
        Average & 74.2& 65.8& 55.0&65.0& 67.6&66.7& 68.4&67.6 &28.4&29.2&28.8\\
        SVD (without Procrustes) &41.6&64.3&60.8&55.6&$\dagger$&$\dagger$&54.3&--&10.4&12.2&11.3\\
        \midrule
        \rowcolor{gray!20}
        Adaptive SVD (ours) &80.5 &79.6 &80.2 & 80.1 &77.3 &75.8 &74.6 &75.5&29.8 &37.8 &33.8\\
       \bottomrule
    \end{tabular}
    }
\end{table}

\paragraph{Effect of model size.}
Table~\ref{tab:model_size} lists three base models to check whether the same continual method still works across model family and across size. The comparison does not control every variable in the strictest sense: Llama-3.2-3B is there because it differs from Llama-2 in family as well as size. The method remains effective on all three. Larger size is not uniformly better, since the intrinsic dimension of the model can change as its size grows, and it does not always perform best. At this scale, 7B and 13B still outperform 3B ($80.1$/$75.5$/$33.8$ and $75.5$/$76.6$/$23.7$ against $72.0$/$73.3$/$16.1$). A larger parameter space can hold more tasks thus forget less, which is the gain from 3B up to the Llama-2 models. From 7B to 13B the gain is much smaller, and on the short Standard stream 7B is ahead of 13B ($80.1$ vs.\ $75.5$), as it is on SuperNI ($33.8$ vs.\ $23.7$). The order reverses on the Large datasets, and 13B is ahead of 7B ($76.6$ vs. $75.5$), so the two sizes cross. We attribute the difference to depth: 13B has more layers than 7B; loss averaged over layers might lower a given task performance.

\begin{table}[H]
  \caption{Results across model size 3B/7B/13B.}
  \label{tab:model_size}
  \centering
  \resizebox{\textwidth}{!}{
    \begin{tabular}{c|c c c c|c c c c| c c c}
    \toprule
    & \multicolumn{4}{c|}{\textbf{Standard CL Benchmark}}
         & \multicolumn{4}{c|}{\textbf{Large Number of Tasks}} & \multicolumn{3}{c}{\textbf{SuperNI Benchmark}} \\
         \midrule
      \textbf{Model} & \textbf{O1} & \textbf{O2} & \textbf{O3} & \textbf{avg}
        & \textbf{O4} & \textbf{O5} & \textbf{O6} & \textbf{avg} & \textbf{O1} & \textbf{O2} & \textbf{avg}\\
        \midrule
        Llama-3.2-3B &73.6&70.4&72.1&72.0& 73.5&72.9&73.4&73.3&14.5&17.8&16.1\\
        \midrule
        Llama-2-7B-chat&80.5 &79.6 &80.2 & 80.1&77.3 &75.8 &74.6 &75.5&29.8 &37.8 &33.8\\
        Llama-2-13B-chat&75.3&74.5&76.7&75.5&78.8&73.5&77.6&76.6&26.4&21.1&23.7\\
       \bottomrule
    \end{tabular}
    }
\end{table}
\begin{figure}[tb]
    \centering
    \includegraphics[width=0.30\textwidth]{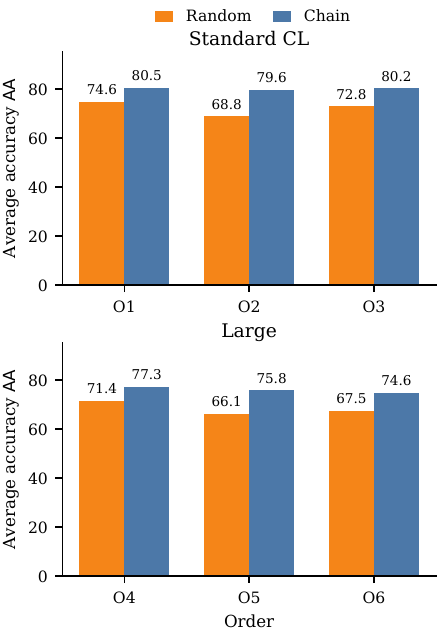}
    \caption{Random vs.\ Chain init on Standard CL and Large.}
    \label{fig:init_ablation}
\end{figure}
\vspace{-5pt}
\paragraph{Varying $\mathbf A_i$ initialization.} Figure~\ref{fig:init_ablation} isolates the chain-updated carrier with random initialization. The latter is already weaker on the short Standard stream; the gap widens on Large, where each new task otherwise starts from an unrelated basis while the archive is still growing. Because the last carrier is the only shared object the next task can see, it gives a better initialization, and both orthogonalization and learning similar tasks.

\vspace{-5pt}
\subsection{Continual learning trajectory}
\vspace{-1.5mm}
Order averages AA hide where forgetting actually happens. Table~\ref{tab:size_per_task} is one Standard order, DBpedia$\to$Amazon$\to$Yahoo$\to$AGNews, so the four rows are the four tasks. The three scores are in different times: $a_{i,0}$ is exact match before task $i$ is seen, $a_{i,i}$ is the score just after that task is trained, and $a_{i,T}$ is the score after the whole stream. On the 3B checkpoint, $a_{i,0}$ stays near zero because the model cannot yet emit the label format without training; only after training is the accuracy high enough for a decent prediction on that task, which is why the next column jumps. What the table shows is that forgetting depends on the task: AGNews barely moves once it is learned, while Yahoo can lose a large amount after learning new tasks, especially at 13B. Figure~\ref{fig:sni_sst2} indicates the same point on SuperNI: SST-2 finishes at $0$ extremely.
\begin{table}[H]
    \caption{Standard CL O1 per-task scores (\%) at three sizes (DBpedia$\to$Amazon$\to$Yahoo$\to$AGNews); $a_{i,0}$ unseen, $a_{i,i}$ just after task $i$, $a_{i,T}$ after the stream, $\Delta{=}a_{i,T}{-}a_{i,i}$.}
  \label{tab:size_per_task}
  \centering
  \resizebox{\textwidth}{!}{
  \begin{tabular}{lcccc|cccc|cccc}
    \toprule
    & \multicolumn{4}{c|}{\textbf{3B}} & \multicolumn{4}{c|}{\textbf{7B}} & \multicolumn{4}{c}{\textbf{13B}} \\
    \textbf{Task} & $a_{i,0}$ & $a_{i,i}$ & $a_{i,T}$ & $\Delta$ & $a_{i,0}$ & $a_{i,i}$ & $a_{i,T}$ & $\Delta$ & $a_{i,0}$ & $a_{i,i}$ & $a_{i,T}$ & $\Delta$ \\
    \midrule
    DBpedia & 1.4 & 86.6 & 84.6 & $-2.0$ & 60.7 & 98.8 & 95.6 & $-3.2$ & 71.4 & 98.7 & 97.2 & $-1.6$ \\
    Amazon & 3.5 & 54.8 & 52.8 & $-2.0$ & 34.4 & 62.5 & 60.0 & $-2.4$ & 44.6 & 54.0 & 59.9 & $+5.9$ \\
    Yahoo & 0.0 & 70.4 & 67.4 & $-3.0$ & 14.9 & 74.3 & 71.2 & $-3.1$ & 34.8 & 74.9 & 54.0 & $-20.9$ \\
    AGNews & 17.0 & 89.3 & 89.4 & $+0.1$ & 61.9 & 90.1 & 90.0 & $-0.1$ & 56.6 & 90.0 & 90.1 & $+0.0$ \\
    \midrule
    AA / mean $\Delta$ on old & \multicolumn{3}{c}{73.6} & $-2.3$ & \multicolumn{3}{c}{79.2} & $-2.9$ & \multicolumn{3}{c}{75.3} & $-5.5$ \\
    \bottomrule
  \end{tabular}
  }
\end{table}

\vspace{-10pt}
\section{Conclusion}
We presented \textbf{ChainLoRA}, a replay-free continual merging framework built on chain-updated task-vector geometry. The resulting linear merge consolidates transferable directions while preserving task-private knowledge. During the stream, chain-updated initialization copies the previous carrier forward, and a one-sided orthogonality surrogate limits drift from that carrier effectively. We further imposed effective-rank regularization to control long-horizon parameter growth for parameter efficiency. The cost stays constant as the stream lengthens. After training, the archive is absorbed into a single adapter for inference. Experiments show that ChainLoRA is the strongest replay-free method we compare on the long stream and approaches comparable performance on the short stream.

\clearpage
\subsection*{AI use statement}
In this work, we used generative AI tools to edit the manuscript for readability and LaTeX formatting. We have not used generative AI tools to generate datasets, experimental numbers, or figures. We reviewed the AI-assisted text and checked the mathematical claims against the appendix. We take responsibility for the final content of this work.

\subsection*{Ethics statement}
This work evaluates publicly available language models on public continual-learning benchmarks. It does not involve human subjects, does not release a new dataset, and does not store user data; the method is replay-free. We are not aware of additional ethics concerns beyond those of ordinary NLP fine-tuning.

\subsection*{Reproducibility statement}
All settings are described in the experimental setup or in the design details of Appendix~\ref{app:setup_details}. The procedure is given as Alg.~\ref{alg:chainlora}.

\bibliographystyle{iclr2027_conference}
\bibliography{references}
\appendix
\clearpage
\appendix
\onecolumn

\newcommand{\AppendixTOCEntry}[3]{%
    \noindent
    \hyperref[#3]{\textbf{#1}\hspace{0.75em}#2}%
    \nobreak\dotfill\nobreak
    \hyperref[#3]{\pageref*{#3}}\par
}
\newcommand{\AppendixTOCSubEntry}[3]{%
    \noindent\hspace*{1.8em}%
    \hyperref[#3]{#1\hspace{0.75em}#2}%
    \nobreak\dotfill\nobreak
    \hyperref[#3]{\pageref*{#3}}\par
}

\phantomsection
\label{app:contents}
\begin{center}
{\Large \bf Appendix}
\end{center}

This appendix is organized in four blocks.
\textbf{Theory}: coefficient-weighted task-vector interaction (Appendix~\ref{app:task_vector_bound}), the normalized interference bound (Appendix~\ref{app:normalized_interference}), and fourth-order overlap with the similar-task bound (Appendix~\ref{app:stiefel_loss}).
\textbf{Training and merge} (Appendix~\ref{app:setup_details}): layer-averaged geometric and rank losses (Appendix~\ref{app:runtime_objective}), inter-task bake (Appendix~\ref{app:bake}), and distance-dependent residual weights (Appendix~\ref{app:residual_weights}).
\textbf{Backward transfer and running curves} (Appendix~\ref{app:additional_experiments}): final backward transfer (Appendix~\ref{app:bwt_results}) and running backward transfer and average accuracy (Appendix~\ref{app:aa_sequence}).

\section*{Contents of the Appendix}
\pdfbookmark[1]{Contents of the Appendix}{appendix.contents}
\newcommand{\appendixentry}[2]{\par\noindent
  \hyperref[#1]{\ref*{#1}\quad #2}\nobreak\dotfill
  \hyperref[#1]{\pageref*{#1}}\par\smallskip}
\par\noindent\textbf{Theory.}
\appendixentry{app:task_vector_bound}{Proof of Theorem~\ref{thm:vector_orthogonality}}
\appendixentry{app:normalized_interference}{Proof of Theorem~\ref{thm:normalized_interference}}
\appendixentry{app:stiefel_loss}{Proof of Theorem~\ref{thm:similar_carriers}}
\par\noindent\textbf{Setup details.}
\appendixentry{app:setup_details}{Implementation defaults}
\begingroup\small\leftskip=1.5em
\appendixentry{app:runtime_objective}{Training-objective correspondence}
\appendixentry{app:bake}{Bake Training}
\appendixentry{app:residual_weights}{Residual weights}
\endgroup
\par\noindent\textbf{Additional experiments.}
\appendixentry{app:additional_experiments}{Additional experiments overview}
\begingroup\small\leftskip=1.5em
\appendixentry{app:bwt_results}{Backward transfer}
\appendixentry{app:aa_sequence}{Running average accuracy}
\endgroup
\medskip

\section{Proof of Theorem~\ref{thm:vector_orthogonality}}
\label{app:task_vector_bound}

\begin{lemma}[Frobenius Cauchy--Schwarz]
\label{lem:frob_cs}
For real matrices \(\mathbf A,\mathbf B\) of the same shape,
\[
\bigl|\langle\mathbf A,\mathbf B\rangle_F\bigr|
\le \|\mathbf A\|_F\|\mathbf B\|_F.
\]
If \(\mathbf A\) and \(\mathbf B\) can be multiplied, then $\bigl|\operatorname{tr}(\mathbf A\mathbf B)\bigr|\le \|\mathbf A\|_F\|\mathbf B\|_F.$
\end{lemma}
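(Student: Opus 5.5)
The plan is to prove the trace form first and then read off the Frobenius form from it. The natural route is to expand the trace as an inner product over entries and apply the ordinary Cauchy--Schwarz inequality in $\mathbb R^{N}$, with $N$ the number of entries.

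\emph{Step 1: the Frobenius inequality.} Write $\langle\mathbf A,\mathbf B\rangle_F=\operatorname{tr}(\mathbf A^{\top}\mathbf B)=\sum_{k,l}A_{kl}B_{kl}$. Then vectorize: set $a=\operatorname{vec}(\mathbf A)$ and $b=\operatorname{vec}(\mathbf B)$, both in $\mathbb R^{N}$. This gives $\langle\mathbf A,\mathbf B\rangle_F=a^{\top}b$. It also gives $\|\mathbf A\|_F=\|a\|_2$ and $\|\mathbf B\|_F=\|b\|_2$. The Euclidean Cauchy--Schwarz inequality $|a^{\top}b|\le\|a\|_2\|b\|_2$ then yields the first claim at once. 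If one wants it self-contained, the standard argument is that the quadratic $t\mapsto\|a+tb\|_2^2\ge0$ has nonpositive discriminant.

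\emph{Step 2: the product version.} Suppose $\mathbf A\in\mathbb R^{p\times q}$ and $\mathbf B\in\mathbb R^{q\times p}$, so that $\mathbf A\mathbf B$ is square and its trace is defined. Then
\[
\operatorname{tr}(\mathbf A\mathbf B)=\sum_{k,l}A_{kl}B_{lk}=\langle\mathbf A,\mathbf B^{\top}\rangle_F .
\]
Applying Step 1 to the pair $(\mathbf A,\mathbf B^{\top})$, which have the same shape, gives
\[
|\operatorname{tr}(\mathbf A\mathbf B)|\le\|\mathbf A\|_F\|\mathbf B^{\top}\|_F .
\]
Finally, $\|\mathbf B^{\top}\|_F=\|\mathbf B\|_F$, because transposition only permutes the entries.

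\emph{Main obstacle.} There is no real difficulty here. The one point needing care is the shape hypothesis in the second statement: it must be read as ``$\mathbf A\mathbf B$ is defined and square''. Otherwise the trace is meaningless. Under that reading the transpose identification in Step 2 is exactly what makes Step 1 applicable.

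For the later use in Theorem~\ref{thm:vector_orthogonality}, I will apply this lemma to
\[
\langle\Delta\mathbf W_i,\Delta\mathbf W_j\rangle_F=\operatorname{tr}\bigl(\mathbf P_i^{\top}\mathbf W_i^{(d)\top}\mathbf W_j^{(d)}\mathbf P_j\bigr)=\operatorname{tr}\bigl((\mathbf W_i^{(d)\top}\mathbf W_j^{(d)})(\mathbf P_j\mathbf P_i^{\top})\bigr).
\]
The second equality uses cyclicity of the trace. The product form of the lemma then gives the bound in Theorem~\ref{thm:vector_orthogonality} directly.
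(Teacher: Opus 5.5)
Your proof is correct and is essentially the paper's: vectorize the entries and apply Euclidean Cauchy--Schwarz, then write the trace as a Frobenius inner product and use that transposition preserves the Frobenius norm; you write it as $\langle\mathbf A,\mathbf B^{\top}\rangle_F$ where the paper uses $\langle\mathbf A^{\top},\mathbf B\rangle_F$, which makes no difference. One wording slip: your opening sentence says you will prove the trace form first, but your steps prove the Frobenius form first and deduce the trace form from it, which is the right order.
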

\begin{proof}
Write the entries as vectors. The first bound is the Euclidean Cauchy--Schwarz inequality
\[
\Bigl|\sum_{ab} \mathbf A_{ab}\mathbf B_{ab}\Bigr|
\le
\Bigl(\sum_{ab} \mathbf A_{ab}^2\Bigr)^{1/2}
\Bigl(\sum_{ab} \mathbf B_{ab}^2\Bigr)^{1/2}.
\]
For the second,
\(\operatorname{tr}(\mathbf A\mathbf B)=\langle\mathbf A^{\top},\mathbf B\rangle_F\)
and \(\|\mathbf A^{\top}\|_F=\|\mathbf A\|_F\).
\end{proof}

\begin{proof}[Proof of Theorem~\ref{thm:vector_orthogonality}]

By cyclicity of the trace,
\begin{align*}
\langle\Delta\mathbf W_i,\Delta\mathbf W_j\rangle_F
&=\operatorname{tr}\!\bigl(
\mathbf P_i^{\top}\mathbf W_i^{(d)\top}
\mathbf W_j^{(d)}\mathbf P_j\bigr)\\
&=\operatorname{tr}\!\bigl(
\mathbf W_i^{(d)\top}\mathbf W_j^{(d)}
\mathbf P_j\mathbf P_i^{\top}\bigr).
\end{align*}

Lemma~\ref{lem:frob_cs} then gives
\[
\bigl|\langle\Delta\mathbf W_i,\Delta\mathbf W_j\rangle_F\bigr|
\le
\bigl\|\mathbf W_i^{(d)\top}\mathbf W_j^{(d)}\bigr\|_F
\,\cdot\,
\bigl\|\mathbf P_j\mathbf P_i^{\top}\bigr\|_F.
\]
If \(\mathbf P_i\mathbf P_j^{\top}=\mathbf 0\), then \(\bigl|\langle\Delta\mathbf W_i,\Delta\mathbf W_j\rangle_F\bigr|\) is also zero, and the update inner product vanishes.
\end{proof}

\section{Proof of Theorem~\ref{thm:normalized_interference}}
\label{app:normalized_interference}

Throughout, \(\mathbf E_i=\mathbf P_i-\mathbf P_{i-1}\) is the local carrier increment after task \(i\), \(\widehat{\mathbf E}_i=\mathbf E_i/\|\mathbf E_i\|_F\), and the current historical carrier is
\(\mathbf H_{i-1}=\mathbf P_{i-1}\) with projector
\(\mathbf\Pi_{i-1}=\mathbf H_{i-1}^{\top}\mathbf H_{i-1}\).
The carrier is the rank-\(r\) carrier passed along the chain.

\begin{proof}[Proof of Theorem~\ref{thm:normalized_interference}]
For convenience, denote \(\mathcal L_{g}^{i}\) as \(\epsilon_i\):
\[
\epsilon_i
=\bigl\|\mathbf H_{i-1}\widehat{\mathbf E}_i^{\top}\bigr\|_F
=\frac{\|\mathbf H_{i-1}\mathbf E_i^{\top}\|_F}{\|\mathbf E_i\|_F}.
\]
Thus the argument never assumes \(\|\mathbf E_i\|_F=1\); only
\(||\widehat{\mathbf E}_i||_F=1\) is unit-Frobenius.

Insert \(\mathbf I=\mathbf\Pi_{i-1}+(\mathbf I-\mathbf\Pi_{i-1})\):
\begin{equation}
\widehat{\mathbf E}_j\widehat{\mathbf E}_i^{\top}
=
\widehat{\mathbf E}_j\mathbf\Pi_{i-1}\widehat{\mathbf E}_i^{\top}
+
\widehat{\mathbf E}_j(\mathbf I-\mathbf\Pi_{i-1})\widehat{\mathbf E}_i^{\top}.
\label{eq:app_interference_split}
\end{equation}
For the first term, substitute
\(\mathbf\Pi_{i-1}=\mathbf H_{i-1}^{\top}\mathbf H_{i-1}\):
\[
\widehat{\mathbf E}_j\mathbf\Pi_{i-1}\widehat{\mathbf E}_i^{\top}
=
\bigl(\widehat{\mathbf E}_j\mathbf H_{i-1}^{\top}\bigr)
\bigl(\mathbf H_{i-1}\widehat{\mathbf E}_i^{\top}\bigr).
\]
Lemma~\ref{lem:frob_cs} then gives:
\begin{align*}
\bigl\|
\widehat{\mathbf E}_j\mathbf\Pi_{i-1}\widehat{\mathbf E}_i^{\top}
\bigr\|_F
&\le
\bigl\|\widehat{\mathbf E}_j\mathbf H_{i-1}^{\top}\bigr\|_F
\bigl\|\mathbf H_{i-1}\widehat{\mathbf E}_i^{\top}\bigr\|_F.
\end{align*}
The second factor is \(\epsilon_i\) by definition.
The first factor is at most one:
\[
\bigl\|\widehat{\mathbf E}_j\mathbf H_{i-1}^{\top}\bigr\|_F
\le
\bigl\|\widehat{\mathbf E}_j\bigr\|_F
\bigl\|\mathbf H_{i-1}\bigr\|_F
=1,
\]

The first term is therefore at most \(\epsilon_i\); For the second term the same product bound gives
\begin{align*}
\bigl\|
\widehat{\mathbf E}_j(\mathbf I-\mathbf\Pi_{i-1})\widehat{\mathbf E}_i^{\top}
\bigr\|_F
&\le
\bigl\|\widehat{\mathbf E}_j(\mathbf I-\mathbf\Pi_{i-1})\bigr\|_F
\bigl\|\widehat{\mathbf E}_i^{\top}\bigr\|_F=\bigl\|\widehat{\mathbf E}_j(\mathbf I-\mathbf\Pi_{i-1})\bigr\|_F.
\end{align*}
Since \(\|\widehat{\mathbf E}_i^{\top}\|_F=1\),
so the second term is at most \(\delta_{j\mid i-1}\).
The triangle inequality yields
\(\|\widehat{\mathbf E}_j\widehat{\mathbf E}_i^{\top}\|_F
\le\epsilon_i+\delta_{j\mid i-1}=\mathcal L_{g}^{i}+\delta_{j\mid i-1}\).
\end{proof}

\paragraph{Remarks} The part of the bound cannot remove is
\(\delta_{j\mid i-1}\). It is the piece of \(\widehat{\mathbf E}_j\) outside the common carrier, and it is the obstruction to orthogonality when that carrier is the rank-\(r\) carrier of
every task.
At task \(i\) this carrier is not yet available.
The inherited carrier \(\mathbf H_{i-1}=\mathbf P_{i-1}\) is the best
approximation on hand, so \(\delta_{j\mid i-1}\) is treated as the
incompressible part between the two tasks under that approximation. When that loss \(\mathcal L_g^i\) is close to zero, the interference bound reduces to \(\delta_{j\mid i-1}\): the two normalized increments have been driven as close to orthogonal as this approximation of the carrier allows.

\section{Proof of Theorem~\ref{thm:similar_carriers}}
\label{app:stiefel_loss}
A rotation of the basis inside a fixed subspace changes the numerical entries of a carrier even though the subspace itself has not moved.
If that basis is not brought to the angle at which orthogonality is measured, carriers can already be orthogonal but look coupled, because only the gauge differs.

Forcing the penalty in that mismatched gauge then rotates an already acceptable pair and can itself induce forgetting.
The alignment below removes this rotational factor by rotating the later carrier onto the earlier one, so the diagnostic sees only what is left after that gauge is fixed.

\begin{lemma}[LoRA overlap]
\label{lem:lora_square_overlap}
In the LoRA factors of Section~\ref{sec:lora_correspondence}, \(\mathbf A\in\mathbb R^{r\times m}\) and \(\mathbf B\in\mathbb R^{n\times r}\).
Thin QR of \(\mathbf A^{\top}\) returns a carrier \(\mathbf P\in\mathbb R^{r\times m}\) with \(\mathbf P\mathbf P^{\top}=\mathbf I_r\), so \(\mathbf P\) lies on the Stiefel manifold
\[
\mathrm{St}(r,m)
=\{\mathbf P\in\mathbb R^{r\times m}:\mathbf P\mathbf P^{\top}=\mathbf I_r\}.
\]
Left multiplication by an orthogonal \(r\times r\) matrix stays on \(\mathrm{St}(r,m)\) and does not change the row space: it only rechooses the basis inside the same subspace.
The coefficient is \(\mathbf W^{(d)}=\mathbf B\widehat{\mathbf R}^{\top}\).
For a second carrier \(\mathbf Q\) of the same shape and \(\mathbf E=\mathbf Q-\mathbf P\),
\[
\mathbf P\mathbf E^{\top}\in\mathbb R^{r\times r}.
\]
\end{lemma}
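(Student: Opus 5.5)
The claims of Lemma~\ref{lem:lora_square_overlap} are elementary, and I would verify them in the order they are stated.

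\textbf{Carrier on the Stiefel manifold.} Thin QR of \(\mathbf A^{\top}\in\mathbb R^{m\times r}\) gives \(\mathbf A^{\top}=\mathbf Q\mathbf R\) with \(\mathbf Q^{\top}\mathbf Q=\mathbf I_r\). The sign correction uses \(\mathbf S=\operatorname{diag}(\operatorname{sign}(\operatorname{diag}(\mathbf R)))\). Since \(\mathbf S^2=\mathbf I_r\), we have \(\widehat{\mathbf Q}\widehat{\mathbf R}=\mathbf Q\mathbf S\mathbf S\mathbf R=\mathbf A^{\top}\), and
\[
\widehat{\mathbf Q}^{\top}\widehat{\mathbf Q}=\mathbf S\mathbf Q^{\top}\mathbf Q\mathbf S=\mathbf I_r .
\]
Hence \(\mathbf P=\widehat{\mathbf Q}^{\top}\) satisfies \(\mathbf P\mathbf P^{\top}=\mathbf I_r\), so \(\mathbf P\in\mathrm{St}(r,m)\). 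One caveat must be stated: this needs \(\mathbf A\) to have full row rank \(r\). Otherwise some diagonal entry of \(\mathbf R\) is zero and the sign convention degenerates. I would add this as a standing assumption; it is generic for Kaiming initialization and for the chain initialization \(\mathbf A_i=\mathbf P_{i-1}\).

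\textbf{Invariance under orthogonal left multiplication.} For \(\mathbf O\in\mathrm O(r)\),
\[
(\mathbf O\mathbf P)(\mathbf O\mathbf P)^{\top}=\mathbf O\mathbf O^{\top}=\mathbf I_r ,
\]
so \(\mathbf O\mathbf P\) stays on \(\mathrm{St}(r,m)\). Its row space is unchanged: \(\mathbf O\) is invertible, so \(\{x^{\top}\mathbf O\mathbf P\}=\{y^{\top}\mathbf P\}\) with \(y=\mathbf O^{\top}x\) ranging over all of \(\mathbb R^r\). Equivalently, the projector \(\mathbf P^{\top}\mathbf O^{\top}\mathbf O\mathbf P=\mathbf P^{\top}\mathbf P\) is invariant. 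This justifies the gauge-fixing remark made before Theorem~\ref{thm:similar_carriers}.

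\textbf{Coefficient and overlap shape.} First I would restate \(\mathbf W^{(d)}=\mathbf B\widehat{\mathbf R}^{\top}\) and check that it reproduces the update:
\[
\mathbf W^{(d)}\mathbf P=\mathbf B\widehat{\mathbf R}^{\top}\widehat{\mathbf Q}^{\top}=\mathbf B(\widehat{\mathbf Q}\widehat{\mathbf R})^{\top}=\mathbf B\mathbf A ,
\]
which is \(\Delta\mathbf W\), as in Section~\ref{sec:lora_correspondence}. Then, for \(\mathbf Q\in\mathbb R^{r\times m}\), \(\mathbf E=\mathbf Q-\mathbf P\) is \(r\times m\), so \(\mathbf P\mathbf E^{\top}\) is \((r\times m)(m\times r)=r\times r\). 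It is also useful to record \(\mathbf P\mathbf E^{\top}=\mathbf P\mathbf Q^{\top}-\mathbf I_r\), because this identity is what the later bound \(\mathcal L_g^i\le\eta^2/4\) will exploit.

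There is no real obstacle here. The only delicate point is the full-rank hypothesis needed for uniqueness and for the sign-corrected QR. I would make it explicit rather than leave it implicit as the paper does.
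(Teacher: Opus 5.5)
Your verification is correct and is the direct check the paper leaves implicit. The paper states this lemma without proof and treats Stiefel membership, the $\mathrm{O}(r)$-invariance of the row space, and the $r\times r$ shape of $\mathbf P\mathbf E^{\top}$ as immediate from the sign-corrected thin QR of Section~\ref{sec:lora_correspondence}. Your full-row-rank caveat is a genuine refinement and worth keeping: if $\operatorname{rank}\mathbf A<r$ then some $\mathbf R_{kk}=0$, and with the usual convention $\operatorname{sign}(0)=0$ the matrix $\mathbf S$ becomes singular, so both $\mathbf P\mathbf P^{\top}=\mathbf I_r$ and $\mathbf W^{(d)}\mathbf P=\mathbf B\mathbf A$ can fail; the uniqueness claim after Definition~\ref{def:task_vector} needs the same hypothesis.
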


\begin{lemma}[Symmetric splitting]
\label{lem:sym_skew}
Let \(\mathbf M\) be a square matrix of the same shape as \(\mathbf P\mathbf E^{\top}\) in Lemma~\ref{lem:lora_square_overlap}, hence \(r\times r\).
\[
\operatorname{sym}(\mathbf M)=\tfrac12(\mathbf M+\mathbf M^{\top}),
\qquad
\operatorname{skew}(\mathbf M)=\tfrac12(\mathbf M-\mathbf M^{\top}).
\]
Then \(\mathbf M=\operatorname{sym}(\mathbf M)+\operatorname{skew}(\mathbf M)\), and the two parts are orthogonal.
\end{lemma}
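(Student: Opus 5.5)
The statement to prove is Lemma~\ref{lem:sym_skew}: every square \(\mathbf M\) splits as \(\operatorname{sym}(\mathbf M)+\operatorname{skew}(\mathbf M)\), and the two parts are orthogonal in the Frobenius inner product of Lemma~\ref{lem:frob_cs}. The proof is two short computations.

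\emph{Decomposition.} Adding the definitions gives
\[
\operatorname{sym}(\mathbf M)+\operatorname{skew}(\mathbf M)
=\tfrac12(\mathbf M+\mathbf M^{\top})+\tfrac12(\mathbf M-\mathbf M^{\top})
=\mathbf M.
\]
This holds for any \(r\times r\) matrix, in particular for \(\mathbf M=\mathbf P\mathbf E^{\top}\) from Lemma~\ref{lem:lora_square_overlap}.

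\emph{Orthogonality.} Set \(\mathbf S=\operatorname{sym}(\mathbf M)\) and \(\mathbf K=\operatorname{skew}(\mathbf M)\). Then \(\mathbf S^{\top}=\mathbf S\) and \(\mathbf K^{\top}=-\mathbf K\). We compute
\[
\langle\mathbf S,\mathbf K\rangle_F
=\operatorname{tr}(\mathbf S^{\top}\mathbf K)
=\operatorname{tr}(\mathbf S\mathbf K).
\]
Transpose invariance of the trace gives \(\operatorname{tr}(\mathbf S\mathbf K)=\operatorname{tr}(\mathbf K^{\top}\mathbf S^{\top})=-\operatorname{tr}(\mathbf K\mathbf S)\). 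Cyclicity gives \(\operatorname{tr}(\mathbf K\mathbf S)=\operatorname{tr}(\mathbf S\mathbf K)\). Together these force \(\operatorname{tr}(\mathbf S\mathbf K)=-\operatorname{tr}(\mathbf S\mathbf K)\), so \(\langle\mathbf S,\mathbf K\rangle_F=0\).

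\emph{Consequence.} Expanding the norm of the sum and dropping the vanishing cross term yields the Pythagorean identity
\[
\|\mathbf M\|_F^2=\|\operatorname{sym}(\mathbf M)\|_F^2+\|\operatorname{skew}(\mathbf M)\|_F^2.
\]
This is presumably the form used later to bound \(\|\mathbf P\mathbf E^{\top}\|_F^2\) in the proof of Theorem~\ref{thm:similar_carriers}.

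No step is a real obstacle. The only point needing care is to state explicitly that the inner product is the Frobenius one, \(\langle\mathbf A,\mathbf B\rangle_F=\operatorname{tr}(\mathbf A^{\top}\mathbf B)\), and that \(\mathbf M\) is square. Squareness is what makes \(\mathbf M^{\top}\) the same shape as \(\mathbf M\), and here it holds because \(\mathbf P\mathbf E^{\top}\in\mathbb R^{r\times r}\).
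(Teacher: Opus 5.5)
Your proof is correct. The paper states this lemma without proof and treats it as standard, so your trace computation, $\operatorname{tr}(\mathbf S\mathbf K)=-\operatorname{tr}(\mathbf K\mathbf S)=-\operatorname{tr}(\mathbf S\mathbf K)$, is exactly the routine verification it leaves implicit. In the proof of Theorem~\ref{thm:similar_carriers}, the paper shows that the skew part of $\mathbf P\mathbf E^{\top}$ vanishes after Procrustes alignment and uses only the decomposition, concluding $\mathbf P\mathbf E^{\top}=\operatorname{sym}(\mathbf P\mathbf E^{\top})=-\tfrac12\mathbf E\mathbf E^{\top}$; your Pythagorean identity is consistent with this but is not the step the paper relies on.
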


Denote the raw squared overlap by
\begin{equation}
\mathcal L
=\|\mathbf P(\mathbf Q-\mathbf P)^\top\|_F^2
=\|\mathbf P\mathbf E^{\top}\|_F^2,
\qquad
\mathbf E=\mathbf Q-\mathbf P.
\label{eq:raw_overlap}
\end{equation}
The orthogonality loss of Eq.~\eqref{eq:lg_def} is this square divided by the squared distance:
\begin{equation}
\mathcal L_g^i
=\frac{\mathcal L}{\|\mathbf E\|_F^2},
\qquad
\mathbf E\neq\mathbf 0.
\label{eq:lg_from_raw}
\end{equation}

We therefore convert the claim to be verified into the following two statements.

(i) We prove that \(\mathcal L\) is the fourth-order bound: $\frac{1}{4r}\|\mathbf E\|_F^4\le\mathcal L\le\frac14\|\mathbf E\|_F^4.$

(ii) Eq.~\eqref{eq:lg_from_raw} then makes \(\mathcal L_g^i\) the second-order bound: $\frac{1}{4r}\|\mathbf E\|_F^2\le\mathcal L_g^i\le\frac14\|\mathbf E\|_F^2.$
\begin{proof}[Proof of Theorem~\ref{thm:similar_carriers}]
Let \(\mathbf P,\mathbf Q\in\mathbb R^{r\times m}\) be row-orthonormal and set \(\mathbf E=\mathbf Q-\mathbf P\), with \(0<\|\mathbf E\|_F\le\eta\).
By Lemma~\ref{lem:lora_square_overlap}, \(\mathbf P\mathbf E^{\top}\) is \(r\times r\).

\paragraph{Procrustes alignment seeks largest \(\operatorname{tr}(\mathbf P\mathbf Q^{\top}\mathbf R)\).}
The rotation in Eq.~\eqref{eq:adapt_svd_alignment} is the first restriction: minimize \(\|\mathbf R^{\top}\mathbf Q-\mathbf P\|_F^2\) over \(\mathbf R\in\mathrm{SO}(r)\).
Expand the square in the Frobenius inner product:
\begin{align*}
\|\mathbf R^{\top}\mathbf Q-\mathbf P\|_F^2
&=\|\mathbf R^{\top}\mathbf Q\|_F^2+\|\mathbf P\|_F^2-2\langle\mathbf R^{\top}\mathbf Q,\mathbf P\rangle_F\\
&=\|\mathbf Q\|_F^2+\|\mathbf P\|_F^2-2\operatorname{tr}(\mathbf P\mathbf Q^{\top}\mathbf R).
\end{align*}
The second uses that \(\mathbf R\) is orthogonal, so \(\|\mathbf R^{\top}\mathbf Q\|_F=\|\mathbf Q\|_F\), together with
\[
\langle\mathbf R^{\top}\mathbf Q,\mathbf P\rangle_F
=\operatorname{tr}\bigl(\mathbf P(\mathbf R^{\top}\mathbf Q)^{\top}\bigr)
=\operatorname{tr}(\mathbf P\mathbf Q^{\top}\mathbf R).
\]
Making the square smaller is therefore the same as making \(\operatorname{tr}(\mathbf P\mathbf Q^{\top}\mathbf R)\) larger.

\paragraph{The largest trace: two situations.} Write the aligned carrier as $\mathbf Q^{\star}=\mathbf R^{\top}\mathbf Q.$
Its cross-Gram is $\mathbf S
=\mathbf P(\mathbf Q^{\star})^{\top}
=\mathbf P\mathbf Q^{\top}\mathbf R.$

Take the SVD of the unaligned cross-Gram, \(\mathbf P\mathbf Q^{\top}=\mathbf U\boldsymbol\Sigma\mathbf V^{\top}\).
Then \(\operatorname{tr}(\mathbf S)=\operatorname{tr}(\boldsymbol\Sigma\mathbf V^{\top}\mathbf R\mathbf U)\), so a rotation maximizes the trace by driving \(\mathbf V^{\top}\mathbf R\mathbf U\) toward \(\mathbf I\).
The factors \(\mathbf V\) and \(\mathbf U\) are that alignment; the two cases below are only whether \(\mathbf V\mathbf U^{\top}\) is already a rotation.

(i) When \(\det(\mathbf V\mathbf U^{\top})=1\), $\mathbf S
=\mathbf U\boldsymbol\Sigma\mathbf V^{\top}\mathbf V\mathbf U^{\top}
=\mathbf U\boldsymbol\Sigma\mathbf U^{\top},$ so $\mathbf S^{\top}=\mathbf S.$

(ii) When \(\det(\mathbf V\mathbf U^{\top})=-1\), the factor \(\mathbf V\mathbf U^{\top}\) is not a rotation. Set \(\mathbf D=\operatorname{diag}(1,\ldots,1,-1)\) and \(\mathbf R=\mathbf V\mathbf D\mathbf U^{\top}\).
Then \(\det(\mathbf R)=1\). With this rotation,
\[
\mathbf S
=\mathbf P(\mathbf Q^{\star})^{\top}
=\mathbf U\boldsymbol\Sigma\mathbf D\mathbf U^{\top}
=\mathbf U\operatorname{diag}(\sigma_1,\ldots,\sigma_{r-1},-\sigma_r)\mathbf U^{\top}.
\]
The new diagonal matrix is still equal to its transpose, so again \(\mathbf S^{\top}=\mathbf S\).
The displacement used below is the difference from this aligned carrier,
\[
\mathbf E=\mathbf Q^{\star}-\mathbf P,
\qquad
0<\|\mathbf E\|_F\le\eta.
\]
Expand the product and use \(\mathbf P\mathbf P^{\top}=\mathbf I_r\):
\[
\mathbf P\mathbf E^{\top}
=\mathbf P(\mathbf Q^{\star}-\mathbf P)^{\top}
=\mathbf P(\mathbf Q^{\star})^{\top}-\mathbf P\mathbf P^{\top}
=\mathbf S-\mathbf I_r.
\]
Both \(\mathbf S\) and \(\mathbf I_r\) are symmetric, so \(\mathbf P\mathbf E^{\top}\) is symmetric and $
\operatorname{skew}(\mathbf P\mathbf E^{\top})
=\tfrac12\bigl(\mathbf P\mathbf E^{\top}-(\mathbf P\mathbf E^{\top})^{\top}\bigr)
=\mathbf 0.$

\paragraph{Induced symmetry part.}
Row orthonormality of \(\mathbf Q=\mathbf P+\mathbf E\) gives $
\mathbf P\mathbf E^{\top}+\mathbf E\mathbf P^{\top}
=-\mathbf E\mathbf E^{\top}.$
The left-hand side is twice the symmetric part, so $
\operatorname{sym}(\mathbf P\mathbf E^{\top})
=-\tfrac12\mathbf E\mathbf E^{\top}.$

\paragraph{Fourth-order bound.}
The skew part is already zero, so \(\mathbf P\mathbf E^{\top}\) equals its symmetric part:
\begin{equation}
\mathbf P\mathbf E^{\top}
=-\tfrac12\mathbf E\mathbf E^{\top},
\qquad
\|\mathbf P\mathbf E^{\top}\|_F^2
=\tfrac14\|\mathbf E\mathbf E^{\top}\|_F^2.
\label{eq:stiefel_decomposition}
\end{equation}
If \(s_1,\ldots,s_q\) are the nonzero singular values of \(\mathbf E\), with \(q\le r\), then \(\|\mathbf E\mathbf E^{\top}\|_F^2=\sum_{k=1}^{q}s_k^4\).
Lemma~\ref{lem:frob_cs} applied to \((s_1^2,\ldots,s_q^2)\) and \((1,\ldots,1)\) gives \(\bigl(\sum_{k=1}^{q}s_k^2\bigr)^2\le q\sum_{k=1}^{q}s_k^4\).
Since \(q\le r\), we have the \textit{left bound}; the cross terms in \(\bigl(\sum_k s_k^2\bigr)^2\) are nonnegative, it holds the \textit{right bound}: 
\[
\frac1r\Bigl(\sum_{k=1}^{q}s_k^2\Bigr)^2
\le \sum_{k=1}^{q}s_k^4
\le \Bigl(\sum_{k=1}^{q}s_k^2\Bigr)^2.
\]
Substitution bounds the raw overlap by $\mathcal L=\frac14\|\mathbf E \mathbf E^\top\|_F^2=\frac14\sum_{k=1}^{q}s_k^4$:
\begin{equation}
\frac{1}{4r}\|\mathbf E\|_F^4
\le
\mathcal L
\le
\frac14\|\mathbf E\|_F^4.
\label{eq:fourth_order_raw}
\end{equation}

\paragraph{Second-order bound.}
Eq.~\eqref{eq:lg_from_raw} divides \(\mathcal L\) by \(\|\mathbf E\|_F^2\).
Dividing Eq.~\eqref{eq:fourth_order_raw} by the same factor gives
\begin{equation}
\frac{1}{4r}\|\mathbf E\|_F^2
\le
\mathcal L_g^i
\le
\frac14\|\mathbf E\|_F^2.
\label{eq:second_order_normalized}
\end{equation}

\paragraph{Read back to the theorem itself.}
Since \(\|\mathbf E\|_F\le\eta\), we have $\mathcal L_g^i
\le\frac14\|\mathbf E\|_F^2
\le\frac14\eta^2.$
\end{proof}

\section{Design Details in LoRA}
\label{app:setup_details}

\subsection{Training-Objective Correspondence}
\label{app:runtime_objective}
With \(L\) adapted layers, training uses the main-text losses and averages
them over layers. The geometric term is
\begin{equation}
\mathcal L_{g}^{i}
=\frac{1}{L}\sum_{\ell=1}^{L}
\frac{\|\mathbf H_{i-1,\ell}\mathbf E_{i,\ell}^{\top}\|_F}
{\|\mathbf E_{i,\ell}\|_F+\gamma},
\label{eq:runtime_geometric_loss}
\end{equation}
where \(\mathbf E_{i,\ell}=\mathbf P_{i,\ell}-\mathbf P_{i-1,\ell}\) and
\(\mathbf H_{i-1,\ell}=\mathbf P_{i-1,\ell}\).
The constant is \(\gamma=10^{-8}\), added so the denominator cannot be zero. The rank term is the same penalty on each LoRA layer,
\begin{equation}
\mathcal L_{\mathrm{rank}}^{i}
=\frac{1}{L}\sum_{\ell=1}^{L}
\frac{1}{r}\sum_{k=1}^{r}
\sqrt{\|\mathbf B_{i,\ell,:,k}\|_2^2+\varepsilon},
\label{eq:runtime_rank_loss}
\end{equation}
where \(\varepsilon=10^{-12}\) keeps the square root finite when a column
vanishes. Because \(\mathbf W^{(d)}=\mathbf B\widehat{\mathbf R}^{\top}\),
these columns are the per-layer form of the \(w_k\) in the main text.

\subsection{Bake training}
\label{app:bake}
During training, the adapter is merged into the weights after each task.
A separate adapter for every task would take more storage as the stream
grows. The next LoRA is then trained above the former merged update, so
earlier tasks remain in the forward pass.

\paragraph{Training forward.}
On task $i+1$, suppressing transpose conventions and dropout,
\[
y=\mathbf W_{\mathrm{0}}x+\operatorname{stopgrad}(\Theta_i)x
+\mathbf B_{i+1}\mathbf A_{i+1}x,
\]
where $\Theta_i$ is the merged update of the tasks seen so far.
This matches Algorithm~\ref{alg:chainlora}.

\subsection{Residual Weights}
\label{app:residual_weights}
Equation~\eqref{eq:merge} writes the merged update as a shared trunk plus
weighted residuals, without explaining \(\omega_i\).
For \(T>1\), considering the distance from the $\mathbf P_0$,
\[
\mathbf D_i=\mathbf P_i-\mathbf P_0,
\qquad
d_i=\|\mathbf D_i\|_2,
\qquad
\bar d=\max\Bigl\{\frac1T\sum_{i=1}^{T}d_i,\,10^{-12}\Bigr\},
\]
and dacays in proportion to how far it sits from $\mathbf P_0$:
\begin{equation}
\omega_i
=\frac{1}{1+d_i/\bar d}
,\qquad
\eta=1.
\label{eq:residual_weight}
\end{equation}
The two branches of Eq.~\eqref{eq:merge} are then
\begin{equation}
\mathbf W_g
=\Bigl(\frac1T\sum_{i=1}^{T}\mathbf W_i^{(d)}\Bigr)\mathbf P_0,
\qquad
\mathbf W_{\mathrm{sp}}
=\sum_{i=1}^{T}\omega_i\mathbf W_i^{(d)}\mathbf D_i,
\label{eq:residual_branches}
\end{equation}
and the absorbed update scales only the trunk by the LoRA factor,
\[
\Delta\mathbf W=\frac{\alpha}{r}\mathbf W_g+\mathbf W_{\mathrm{sp}}.
\]

\section{Additional Experiments}
\label{app:additional_experiments}

\subsection{Backward Transfer}
\label{app:bwt_results}

\begin{table}[h]
\centering\small
\caption{Stored final BWT for the model-size grid in
Table~\ref{tab:model_size}. The lower value indicates better performance in hindering forgetting.}
\label{tab:bwt_model_size}
\resizebox{\textwidth}{!}{
\begin{tabular}{l|cccc|cccc|ccc}
\toprule
& \multicolumn{4}{c|}{\textbf{Standard CL}}
& \multicolumn{4}{c|}{\textbf{Large number of tasks}}
& \multicolumn{3}{c}{\textbf{SuperNI}} \\
\textbf{Model}
& O1 & O2 & O3 & avg
& O4 & O5 & O6 & avg
& O1 & O2 & avg \\
\midrule
Llama-3.2-3B
& $-0.1$ & $-7.0$ & $-5.7$ & $-4.3$
& $-0.3$ & $-2.9$ & $-4.9$ & $-2.7$
& $-16.7$ & $-24.6$ & $-20.6$ \\
Llama-2-7B-chat
& $-0.6$ & $-1.6$ & $-1.9$ & $-1.4$
& $+0.4$ & $-4.4$ & $-5.7$ & $-3.2$
& $-10.8$ & $-7.1$ & $-8.9$ \\
Llama-2-13B-chat
& $-5.5$ & $-7.1$ & $-6.7$ & $-6.4$
& $-10.6$ & $-22.0$ & $-13.7$ & $-15.4$
& $-23.7$ & $-30.8$ & $-27.3$ \\
\bottomrule
\end{tabular}
}
\end{table}
\vspace{-10pt}
\subsection{Running average accuracy}
\label{app:aa_sequence}
\begin{figure}[htbp]
  \centering
  \includegraphics[width=\linewidth]{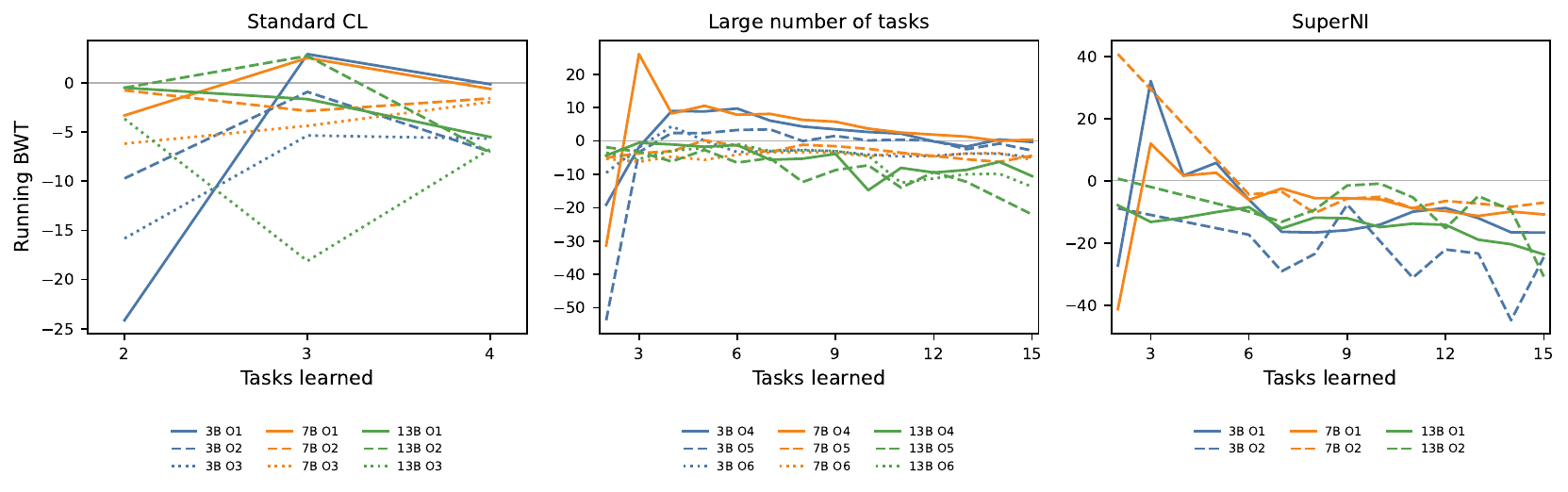}
  \caption{Running BWT after each completed task, for the runs in
  Table~\ref{tab:bwt_model_size}.}
  \label{fig:bwt_running}
\end{figure}
\begin{figure}[htbp]
  \centering
  \includegraphics[width=\linewidth]{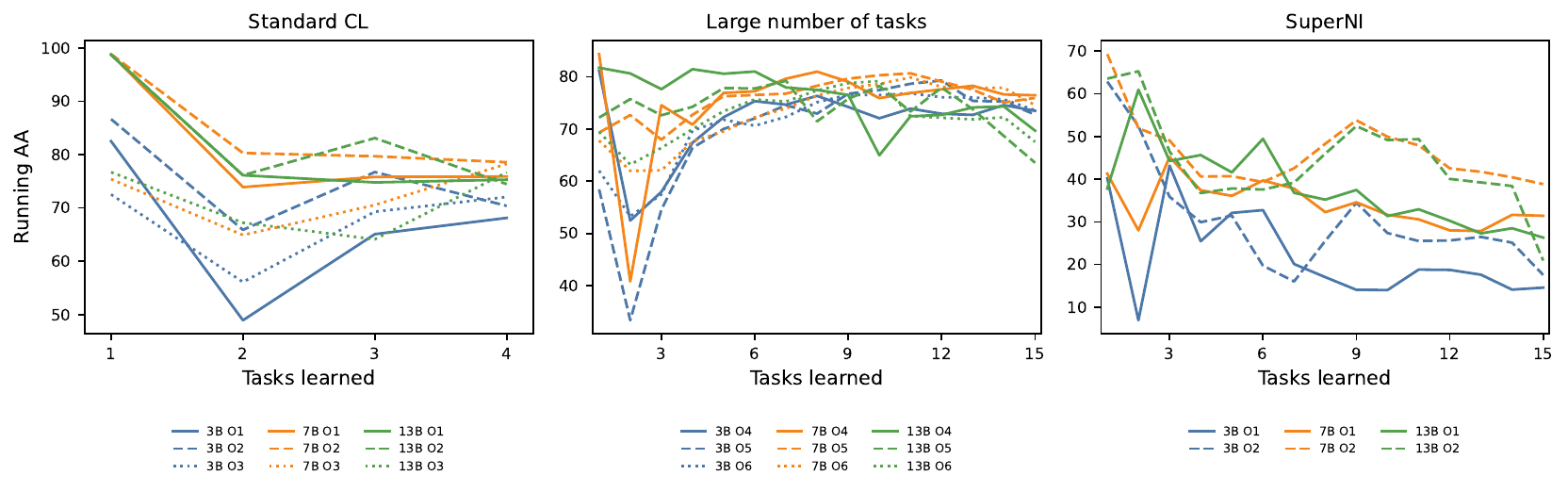}
  \caption{Running AA after each completed task, for the runs in Table~\ref{tab:bwt_model_size}. At step \(t\), \(\mathrm{AA}_t\) averages one score for each task seen so far.}
  \label{fig:aa_running}
\end{figure}

\end{document}